\DeclareMathOperator*{\argmin}{argmin}
\global\long\def\bX{\mathbf{X}}%
\global\long\def\bY{\mathbf{Y}}%
\global\long\def\1{\mathbf{1}_{N}}%
\global\long\def\NTK{\mathsf{NTK}}%
\global\long\def\dt{\frac{d}{dt}}%
\declaretheorem[name=Corollary]{corollary}
\title{Early learning of the optimal constant solution in neural networks and humans}
\author{
Jirko Rubruck\,$^1$%
\\
  University of Oxford
  \And
  Jan P. Bauer\,$^2$ \\
  The Hebrew University
  \AND
  Andrew Saxe\,$^\dag$\\
  University College London
  \And
  Christopher Summerfield\,$^\dag$\\
  University of Oxford 
}
\newcommand\blfootnote[1]{%
  \begingroup
  \renewcommand\thefootnote{}\footnote{#1}%
  \addtocounter{footnote}{-1}%
  \endgroup
}
\begin{document}
\bibliographystyle{unsrtnat}

\maketitle
\blfootnote{$^1$\,\texttt{jirko.rubruck@psy.ox.ac.uk}, $^2$\,\texttt{jan.bauer@mail.de}}
\blfootnote{$^\dag$\,Co-senior authors}

\begin{abstract}

Deep neural networks learn increasingly complex functions over the course of training. Here, we show both empirically and theoretically that learning of the target function is preceded by an early phase in which networks learn the optimal constant solution (OCS) – that is, initial model responses mirror the distribution of target labels, while entirely ignoring information provided in the input. Using a hierarchical category learning task, we derive exact solutions for learning dynamics in deep linear networks trained with bias terms. Even when initialized to zero, this simple architectural feature induces substantial changes in early dynamics.  We identify hallmarks of this early OCS phase and illustrate how these signatures are observed in deep linear networks and larger, more complex (and nonlinear) convolutional neural networks solving a hierarchical learning task based on MNIST and CIFAR10.  We explain these observations by proving that deep linear networks necessarily learn the OCS during early learning. To further probe the generality of our results, we train human learners over the course of three days on the category learning task. We then identify qualitative signatures of this early OCS phase in terms of the dynamics of true negative (correct-rejection) rates. Surprisingly, we find the same early reliance on the OCS in the behaviour of human learners. Finally, we show that learning of the OCS can emerge even in the absence of bias terms and is equivalently driven by generic correlations in the input data. Overall, our work suggests the OCS as a universal learning principle in supervised, error-corrective learning, and the mechanistic reasons for its prevalence.
\end{abstract}

\section{Introduction}

Neural networks trained with stochastic gradient descent (SGD) exhibit various \textit{simplicity biases}, where models tend to learn simple functions before more complex ones \citep{kalimeris_sgd_2019, rahaman_spectral_2019}. Simplicity biases hold significant theoretical interest as they provide an explanation for why deep networks generalize in practice, despite their expressivity and the presence of minima with arbitrarily high test error \citep{bhattamishra_simplicity_2023, valle-perez_deep_2019, zhang_understanding_2021}. 

The characterisation of simplicity biases is still incomplete. Some explanations appeal to distributional properties of input data, pointing out that SGD progressively learns increasingly higher-order moments \citep{refinetti_neural_2023, belrose_neural_2024}. Other approaches focus directly on the evolution of the network function, proposing that networks initially learn a classifier highly correlated with a linear model. Importantly, networks continue to perform well on examples correctly classified by this simple function, even when overfitting in later training \citep{kalimeris_sgd_2019}. This implies that dynamical simplicity biases help models generalise, by locking in initial knowledge that is not erased or forgotten during later training  \citep{braun_exact_2022, kalimeris_sgd_2019}. 

Deep linear networks have proven to be a valuable tool for studying simplicity biases. A key finding is that directions in the network function are learned in order of importance \citep{saxe_exact_2014,saxe2019mathematical}. This phenomenon, known as \textit{progressive differentiation}, connects modern deep learning theory to both to human child development and to the earliest connectionist models of semantic cognition \citep{rogers_semantic_2004, rumelhart_parallel_1986}.

Our contribution proposes a connection between these works by characterizing networks in the \textit{earliest} stages of learning in terms of input, output, and architecture. In the hierarchical setting by \citet{saxe2019mathematical}, we demonstrate both theoretically and empirically that neural networks initially learn via the output statistics of the data. This function has been termed the optimal constant solution (OCS) by \citet{kang_deep_2024}, who demonstrated that networks revert to the OCS when probed on out-of-sample inputs. Here, we demonstrate and prove how linear networks, when equipped with these bias terms, necessarily learn the OCS early in training. We furthermore highlight the practical relevance of these results by examining early learning dynamics in complex, non-linear architectures.

Biological learners also display behaviours that imply the input-independent learning of output statistics. In probability matching, responses mirror the probabilities of rewarded actions \citep{herrnstein_relative_1961, estes_probability_1964, estes_analysis_1954}. Learners often display non-stationary biases that are driven by the distribution of recent responses \citep{jones_role_2015, gold_relative_2008, verplanck_nonindependence_1952}. In paired-associates learning accuracy can depend not only on a learned input-output mapping but also on knowledge of the task structure \citep{hawker_influence_1964, bower_association_1962}. Humans also display simplicity biases and preferentially use simple over complex functions \citep{feldman_minimization_2000, goodman_rational_2008, chater_reconciling_1996,lombrozo_simplicity_2007, feldman_simplicity_2003}. However, relatively little attention has been devoted to the dynamics of these biases. We conduct experiments to determine whether humans replicate early reliance on the OCS.
\begin{figure}[t]
    \includegraphics[width=0.9\linewidth]{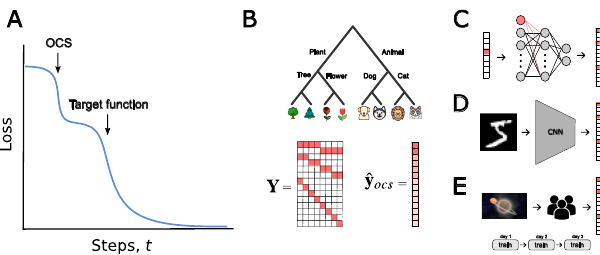}
    \centering
    \caption{Universal early learning of the optimal constant solution (OCS). \textbf{A} Graphical illustration of our hypothesis. \textbf{B} The hierarchical learning task used across learners and the OCS solution \(\hat{\mathbf{y}}_{ocs}\). \textbf{C} Learning task in linear networks with bias terms. \textbf{D} The task in CNNs. \textbf{E} The task for humans.}
    \label{fig:Task-and-setting}
\end{figure}
\subsection{Contributions}
\begin{itemize}

\item We devise exact solutions for learning dynamics to analyse linear networks with bias in the input layer. Even when initialized at zero, this component substantially alters \textit{early} learning dynamics.

\item We empirically characterise early learning in these linear networks as being dominated by average output statistics. We explain this result with a theoretical analysis which reveals that average output statistics are always learned first when the network contains bias terms.

\item We further highlight the practical relevance of these theoretical results in a hierarchical learning task for humans as well as complex, non-linear architectures by empirically demonstrating that all learners develop stereotypical response biases during early stages of training.

\item We conjecture on the basis of the developed theory that an early response bias should emerge even in absence of bias terms. We empirically demonstrate that learning of the OCS can indeed be driven purely by generic correlations in the input data.
\end{itemize}

\subsection{Related work}

\textbf{Deep linear networks.}
In deep linear networks analytical solutions have been obtained for certain initial conditions and datasets \citep{saxe_exact_2014, saxe2019mathematical, braun_exact_2022, fukumizu_effect_1998}. Progress has also been made in understanding linear network loss landscapes \citep{baldi_neural_1989} and generalisation ability \citep{lampinen_analytic_2019}. Despite their linearity these models display complex non-linear learning dynamics which reflect behaviours seen in non-linear models \citep{saxe2019mathematical}. Moreover, learning dynamics in such simple models have been argued to qualitatively resemble phenomena observed in the cognitive development of humans \citep{saxe2019mathematical, rogers_semantic_2004}. 

\textbf{Biological response biases.} Humans and animals routinely display response biases during perceptual learning and decision making tasks \citep{gold_relative_2008, jones_role_2015, liebana_garcia_striatal_2023, amitay_perceptual_2014, urai_choice_2019}.  
In these tasks decisions are frequently made in sequences where responses and feedback steer decisions beyond the provided perceptual evidence \citep{jones_role_2015, fan_neural_2024,gold_relative_2008,verplanck_nonindependence_1952,sugrue_matching_2004}. 
Non-stationary response biases can be driven by feedback on previous trials \citep{dutilh_how_2012,rabbitt_what_1977} or might reflect global beliefs about the statistics of a task \citep{fan_neural_2024, jones_role_2015}. Importantly, response biases are particularly pronounced in early learning \citep{jones_role_2015, gold_relative_2008, liebana_garcia_striatal_2023} and their influence appears to be strongest when uncertainty about the correct response is highest \citep{gold_relative_2008, fan_neural_2024}. 

\textbf{Simplicity biases in machine learning.}
Simplicity biases in neural networks have been studied extensively both theoretically \citep{Bordelon2020,Mei22Generalizationerrorrandom} and empirically \citep{bhattamishra_simplicity_2023, mingard_deep_2023}. Work on the \textit{distributional} simplicity bias emphasises the importance of input data and proposes that models learn via progressive exploitation of dataset moments \citep{refinetti_neural_2023, belrose_neural_2024}. On the other hand, neural networks have been found to express simpler functions during early training \citep{kalimeris_sgd_2019, refinetti_neural_2023, belrose_neural_2024, rahaman_spectral_2019}. Our work draws a connection between these findings and highlights how input statistics bias early learning towards output statistics. 

\subsection{Paper organisation} 
We initially review the linear network formalism in \cref{sec:lin-net-formalism} on which we base our theoretical analysis. In \cref{sec:early-dyn-bias} we derive learning dynamics for linear networks with bias terms trained on a classic hierarchical task and we document substantial changes in early dynamics. \cref{sec:early-dyn-ocs} characterizes this period of early learning empirically, and provides a theoretical explanation. We then in \cref{sec:early-dyn-ocs} validate the relevance of our findings for learning in complex models. \cref{sec:learners} demonstrates the prevalence of early OCS learning in humans. Finally, \cref{sec:input_corrs} further probes generality by considering natural datasets and models that do not strictly fulfil the previous theoretical assumptions. 

\section{Linear network preliminaries}
\label{sec:lin-net-formalism}

Here, we briefly review the analytical approach to learning dynamics in linear networks developed by \cite{saxe_exact_2014, saxe2019mathematical}. 
Consider a learning task in which a network is presented with input vectors \( \mathbf{x}_i \in \mathbb{R}^{N_{in}} \) that are associated to output vectors \( \mathbf{y}_i \in \mathbb{R}^{N_{out}} \). The total dataset consists of \( \{ \mathbf{x}_i, \mathbf{y}_i \}^N_{i=1} \) with \(N\) samples. For our setting we consider two layer linear networks where the forward pass computes \(\hat{\mathbf{y}}_i = \mathbf{W}^2\mathbf{W}^1\mathbf{x}_i\) and shallow networks with forward pass \(\hat{\mathbf{y}}_i =\mathbf{W}^s\mathbf{x}_i\). Here weight matrices are of dimension \( \mathbf{W}^1 \in \mathbb{R}^{N_{hid} \times N_{in}} \),  \( \mathbf{W}^2 \in \mathbb{R}^{N_{out} \times N_{hid}} \), and \( \mathbf{W}^s \in\mathbb{R}^{N_{out} \times N_{in}} \). We train our networks to minimise a squared error loss of the form \(\mathcal{L}(\hat{\mathbf{y}}) = \frac{1}{2} \sum_{i=1}^{N} \| \mathbf{y}_i - \hat{\mathbf{y}}_i \|^2.\)

We optimise networks using full batch-gradient descent in the gradient flow regime. When learning from small initial conditions, dynamics in these simple networks are solely dependent on the dataset input-output and input-input correlation matrices \citep{saxe_exact_2014}. Using singular value decomposition (SVD), these matrices can be expressed as 
\begin{equation}\label{eq:sim-diag}
\mathbf{\Sigma}^{yx}=\frac{1}{N} \mathbf{YX}^T = \mathbf{USV}^T, \quad \mathbf{\Sigma}^{x}=\frac{1}{N} \mathbf{XX}^T = \mathbf{VDV}^T. 
\end{equation}
Here \(\mathbf{X} \in \mathbb{R}^{N_{in} \times N}\) and \(\mathbf{Y} \in \mathbb{R}^{N_{out} \times N}\)contain the full set of input vectors and output vectors. Crucially, if  the right singular vectors \(\mathbf{V}^T\)of \(\mathbf{\Sigma}^{yx}\) diagonalise \(\mathbf{\Sigma}^{x}\) (see \cref{thm:commute}) then the full evolution of network weights for deep and shallow networks through time can be described as
\begin{equation}  
\label{eq:time-dep-svd}
\mathbf{W}^2(t)\mathbf{W}^1(t)=\mathbf{UA}(t)\mathbf{V}^T. 
\end{equation}
Here \(\mathbf{A}(t)\) is a diagonal matrix. The evolution of these diagonal values \(\mathbf{A}(t)_{\alpha \alpha} = a_{\alpha}(t)\) at each time-step \(t\) then follows a sigmoidal trajectory as expressed in \cref{eq:trajec_a}. For shallow networks we can similarly describe the evolution of the weight matrix \(\mathbf{W}^s(t)\) as \(\mathbf{UB}(t)\mathbf{V}^T\). Here the diagonal values \(\mathbf{B}(t)_{\alpha \alpha} = b_{\alpha}(t)\) evolve as seen in \cref{eq:trajec_b}

\noindent
\begin{minipage}{.5\textwidth}
\begin{equation}\label{eq:trajec_a}
    a_{\alpha}(t) = \frac{\nicefrac{s_{\alpha}}{d_{\alpha}}}{1 - (1 - \frac{s_{\alpha}}{d_{\alpha}a_{0}})e^{-\frac{2s_{\alpha}}{\tau}t}} 
\end{equation}
\end{minipage}%
\begin{minipage}{.5\textwidth}
\begin{equation}\label{eq:trajec_b}
    b_{\alpha}(t) = \frac{s_{\alpha}}{d_{\alpha}} (1 - e^{-\frac{d_{\alpha} }{\tau}t}) + b_{0}e^{-\frac{d_{\alpha} }{\tau}t}
\end{equation}
\end{minipage}

In \cref{eq:trajec_a} \(s_{\alpha} = \mathbf{S}_{\alpha\alpha}\) and \(d_{\alpha} = \mathbf{D}_{\alpha \alpha}\) denote the relevant singular values of \(\mathbf{\Sigma}^{yx}\) and the eigenvalues of \(\mathbf{\Sigma}^{x}\) respectively, \(a_{0}\) are the singular values at initialisation, and \(\tau=\frac{1}{N\epsilon}\) is the time constant where \(\epsilon\) is the learning rate. In \cref{eq:trajec_b} \(b_{0}\) is the initial condition given by the initialisation. Importantly, these relations reveal that singular values control learning speed. These solutions hinge on the diagonalisation of \(\mathbf{\Sigma}^{x}\) through \(\mathbf{V}\). Prior work has focused on the case of white inputs, i.e. \(\mathbf{\Sigma}^{x} = \mathbf{I}_N\) where  \(\mathbf{I}_N\) denotes the \(N \times N \) identity matrix. The solution holds trivially as any \(\mathbf{V}\) will orthogonalise \(\mathbf{\Sigma}^{x}\) \citep{saxe2019mathematical}. We discuss a relevant relaxation of this condition in \cref{thm:commute}. While solutions can be derived for some non-white inputs, little attention has been devoted to learning dynamics in these scenarios. We will show how these solutions apply when networks contain bias terms in the input layer.

\section{Exact learning dynamics with bias terms}
\label{sec:early-dyn-bias}
\begin{figure}
\includegraphics[width=.9\linewidth]{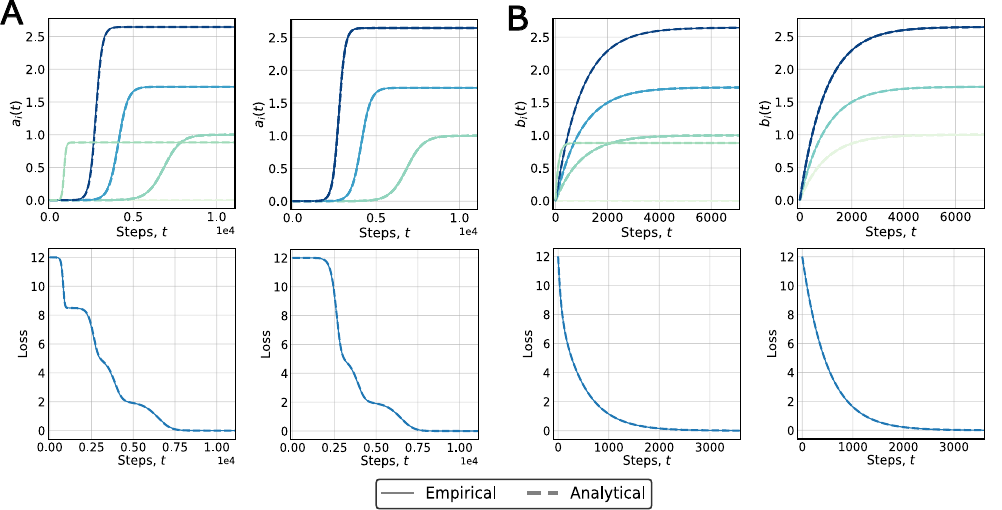}
    \centering
    \caption{Exact learning dynamics. \textbf{A} Deep linear networks with (left) and without (right) bias. \textbf{B} Shallow linear networks with (left) and without (right) bias. \textit{Top row:} Exact and simulated \(\mathbf{A}(t) \text{ and }\mathbf{B}(t)\) for deep and shallow networks respectively. \textit{Bottom row:} Exact and simulated loss.}
    \label{fig:Exact-Bias-solutions}
\end{figure}

In this section, we derive exact learning dynamics in linear networks with bias terms and analyse the resulting changes in the dynamics. This extension to the theory by \citet{saxe_exact_2014} forms the basis for our later discussion.  For simplicity, we focus on input bias terms and uncorrelated data, but explore bias terms in other layers and correlated inputs in \cref{app:learning-dynamics-bias-terms} and \cref{sec:input_corrs}, respectively.

\paragraph{Closed-form learning dynamics.}
We consider uncorrelated inputs \(\mathbf{X} = \mathbf{I}_N\) where \(\mathbf{I}_N\) denotes the \(N \times N\) identity matrix. Our linear network with a bias term in the input layer will compute \( \mathbf{W}^2(\mathbf{\tilde{W}}^1\mathbf{x}_i + \mathbf{\tilde{b}})\)  where \(\mathbf{\tilde{b}}\) are learnable bias terms.

The diagonalisation of $\mathbf{\Sigma}^{x}$
through $\mathbf{V}$ in \cref{eq:sim-diag} is not generally possible if the input layer includes bias terms. Here, we state the condition under which learning dynamics can be described in closed-form. 

\begin{restatable}[Feasibility of closed-form learning dynamics]{proposition}{commutemain}
\label{thm:commute}
For any input data $\mathbf{X}\in\mathbb{R}^{N_{in}\times N}$ and
output data $\mathbf{Y}\in\mathbb{R}^{N_{out}\times N}$ it is possible to diagonalize 
$\mathbf{\Sigma}^{x}$ by the right singular vectors $\mathbf{V}$
of $\mathbf{\Sigma}^{yx}$ if $\mathbf{Y}^{T}\mathbf{Y}$ and $\mathbf{X}^{T}\mathbf{X}$
commute. 
The converse holds true only if $\mathbf{X}$ has a left inverse. 
\end{restatable}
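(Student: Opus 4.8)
The plan is to recast this statement about singular vectors as a statement about two commuting symmetric matrices. Set $A:=\mathbf{X}^{T}\mathbf{X}$ and $B:=\mathbf{Y}^{T}\mathbf{Y}$, both symmetric positive semidefinite in $\mathbb{R}^{N\times N}$. By definition, a right-singular-vector matrix $\mathbf{V}$ of $\mathbf{\Sigma}^{yx}=\tfrac1N\mathbf{Y}\mathbf{X}^{T}$ is an orthonormal eigenbasis of $(\mathbf{\Sigma}^{yx})^{T}\mathbf{\Sigma}^{yx}=\tfrac1{N^{2}}\mathbf{X}B\mathbf{X}^{T}$, ordered by non-increasing eigenvalue; conversely, any such ordered orthonormal eigenbasis is an admissible $\mathbf{V}$, because the matching left-singular vectors are recovered by $\mathbf{u}_{\alpha}=\mathbf{\Sigma}^{yx}\mathbf{v}_{\alpha}/s_{\alpha}$ on the positive-singular-value block and chosen arbitrarily (orthonormal) on its complement. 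Likewise, $\mathbf{V}$ diagonalises $\mathbf{\Sigma}^{x}=\tfrac1N\mathbf{X}\mathbf{X}^{T}$ precisely when it is an orthonormal eigenbasis of $\mathbf{X}\mathbf{X}^{T}$. So, reading ``it is possible to diagonalise $\mathbf{\Sigma}^{x}$ by $\mathbf{V}$'' as ``some admissible SVD of $\mathbf{\Sigma}^{yx}$ has this property'', the proposition becomes: $\mathbf{X}\mathbf{X}^{T}$ and $\mathbf{X}B\mathbf{X}^{T}$ possess a common orthonormal eigenbasis, which for real symmetric matrices is equivalent to $\mathbf{X}\mathbf{X}^{T}\mathbf{X}B\mathbf{X}^{T}=\mathbf{X}B\mathbf{X}^{T}\mathbf{X}\mathbf{X}^{T}$.

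\textbf{Forward direction.} I would simply expand both products: $\mathbf{X}\mathbf{X}^{T}\cdot\mathbf{X}B\mathbf{X}^{T}=\mathbf{X}(AB)\mathbf{X}^{T}$ and $\mathbf{X}B\mathbf{X}^{T}\cdot\mathbf{X}\mathbf{X}^{T}=\mathbf{X}(BA)\mathbf{X}^{T}$. If $AB=BA$ these coincide, so $\mathbf{X}\mathbf{X}^{T}$ and $\mathbf{X}B\mathbf{X}^{T}$ commute and hence share an orthonormal eigenbasis; after permuting that basis so the eigenvalues of $\mathbf{X}B\mathbf{X}^{T}$ are non-increasing, it serves as a valid $\mathbf{V}$ that simultaneously diagonalises $\mathbf{\Sigma}^{x}$.

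\textbf{Converse.} Suppose $\mathbf{V}$ is a right-singular-vector matrix of $\mathbf{\Sigma}^{yx}$ that also diagonalises $\mathbf{\Sigma}^{x}$. Then $\mathbf{X}\mathbf{X}^{T}=N\mathbf{V}\mathbf{D}\mathbf{V}^{T}$ and $\mathbf{X}B\mathbf{X}^{T}=N^{2}\mathbf{V}\mathbf{S}^{T}\mathbf{S}\mathbf{V}^{T}$ are both diagonalised by the orthogonal matrix $\mathbf{V}$, so they commute, i.e.\ $\mathbf{X}(AB-BA)\mathbf{X}^{T}=0$. To strip the outer factors I would use a left inverse $\mathbf{L}$ of $\mathbf{X}$ (which exists iff $\mathbf{X}$ has full column rank): from $\mathbf{L}\mathbf{X}=\mathbf{I}_{N}$ and $\mathbf{X}^{T}\mathbf{L}^{T}=\mathbf{I}_{N}$ we get $AB-BA=\mathbf{L}\,\mathbf{X}(AB-BA)\mathbf{X}^{T}\,\mathbf{L}^{T}=0$, so $A$ and $B$ commute.

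\textbf{Where the care is needed.} The substantive content of the forward implication is the one-line identity $\mathbf{X}(AB)\mathbf{X}^{T}=\mathbf{X}(BA)\mathbf{X}^{T}$; the rest is the standard fact that commuting real symmetric matrices are simultaneously orthogonally diagonalisable, which I would cite rather than reprove. The delicate bookkeeping is all about non-uniqueness: (i) pinning down what ``diagonalise $\mathbf{\Sigma}^{x}$ by $\mathbf{V}$'' means, since repeated or zero singular values make $\mathbf{V}$ highly non-unique; (ii) checking that an arbitrary orthonormal eigenbasis of $\tfrac1{N^{2}}\mathbf{X}B\mathbf{X}^{T}$ genuinely completes to an SVD of $\mathbf{\Sigma}^{yx}$, in particular on the zero-singular-value block where $\mathbf{U}$ must be filled in separately; and (iii) the permutation aligning the common eigenbasis with the SVD's decreasing-value convention. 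For the converse, the only real obstacle is that the left-inverse hypothesis cannot be dropped: when $\ker\mathbf{X}\neq\{0\}$ the equation $\mathbf{X}(AB-BA)\mathbf{X}^{T}=0$ no longer forces $AB=BA$, and I would include a small explicit pair $\mathbf{X},\mathbf{Y}$ to witness this.
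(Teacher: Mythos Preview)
Your proposal is correct and follows essentially the same route as the paper: both reduce the question to whether $\mathbf{\Sigma}^{x}=\tfrac1N\mathbf{X}\mathbf{X}^{T}$ and $(\mathbf{\Sigma}^{yx})^{T}\mathbf{\Sigma}^{yx}=\tfrac1{N^{2}}\mathbf{X}\mathbf{Y}^{T}\mathbf{Y}\mathbf{X}^{T}$ commute, observe that their commutator equals $\mathbf{X}(AB-BA)\mathbf{X}^{T}$ with $A=\mathbf{X}^{T}\mathbf{X}$ and $B=\mathbf{Y}^{T}\mathbf{Y}$, and strip the outer $\mathbf{X},\mathbf{X}^{T}$ via a left inverse for the converse. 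Your write-up is more explicit about the SVD non-uniqueness bookkeeping than the paper's proof, but the substantive argument is identical.
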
 
A proof is given in \cref{app:commute}. We put this statement to use to assess the effect of a bias term on learning, building on the formalism from \cref{sec:lin-net-formalism}. To this end, we re-express the network weights as \(\mathbf{W}^1 =\begin{bmatrix} \mathbf{\tilde{b}} & \mathbf{\tilde{W}^1}
\end{bmatrix} \) with inputs defined as \(\mathbf{x}_i = \begin{bmatrix}
  1 &
  \mathbf{I}_i^T
\end{bmatrix}^T\) where \(\mathbf{I}_i\) denotes the \(i\)th column of the \(N \times N\) identity matrix (see \cref{app:Equivalence-of-bias}). Exact learning trajectories for the hierarchically structured output data are depicted in \cref{fig:Task-and-setting}B. 

Importantly, the introduction of  a bias term \(\bX \rightarrow \begin{bmatrix} \1 \: \bX\end{bmatrix}^T\) does not affect the commutativity of $\mathbf{X}^{T}\mathbf{X}$ and $\mathbf{Y}^{T}\mathbf{Y}$ for the hierarchical dataset, as the constant mode \(\1\) (i.e., a vector of 1s) is already an eigenvector to both these similarity matrices (see \cref{app:hierarchical_one}). In consequence, the analytical solutions in \cref{sec:lin-net-formalism} remain applicable. We generalise these considerations in \cref{sec:input_corrs} and \cref{app:one_in_XX_YY_general}. 

\cref{fig:Exact-Bias-solutions} shows that linear networks with bias terms have a distinctly different
early learning phase when compared to vanilla linear networks. While both models converge to a zero loss solution, we observe that the final network function with bias terms contains an additional non-zero singular singular value with their associated singular vectors.  We devote the next section to analyzing this change in the early dynamics.

\section{Bias terms drive early learning towards the optimal constant solution}
\label{sec:early-dyn-ocs}
\begin{figure}[t]
    \begin{center}
        \includegraphics[scale=0.7]{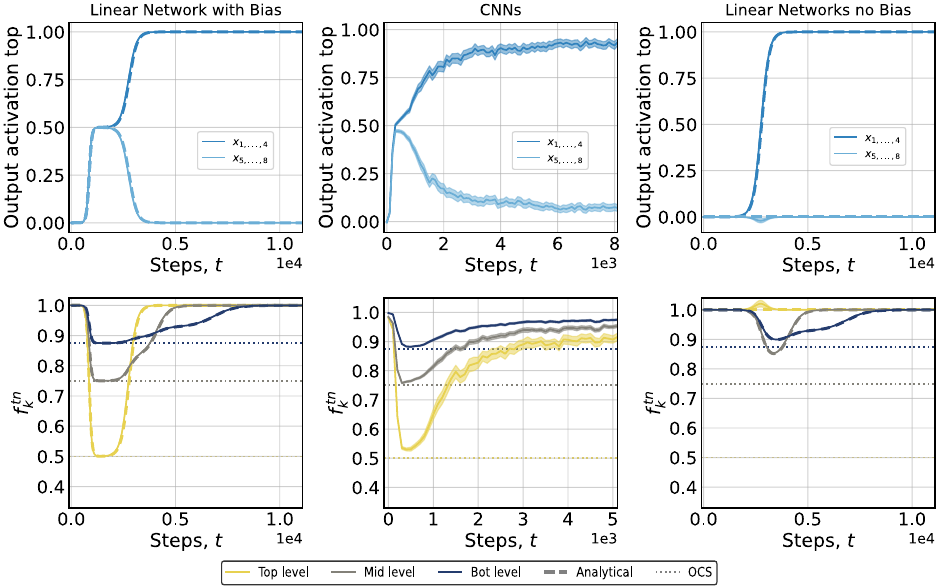}
    \end{center}
    \caption{Early learning is driven to the OCS. \textit{Top row:} Network outputs for a single output unit in response to all inputs \(\mathbf{x}_i\). \textit{Bottom row:} Continuous correct-rejection scores \(f^{tn}\) for the three hierarchical levels as indicated by colours (\cref{app:continous-TPR-and-TNR-Rates}) Performance approaches levels expected under the OCS (dotted lines). \textit{Left:} Linear networks with bias terms. \textit{Center:} CNNs. \textit{Right:} Linear networks without bias terms.}
    \label{fig:outputs}
\end{figure}

In this section, we qualitatively characterize what causes observed changes in early learning dynamics. We find that early learning dynamics are driven by average output statistics and provide a theoretical explanation. We then demonstrate the generality of this result by highlighting how early learning of average output statistics can be similarly observed in complex, non-linear architectures. 

A naive strategy to learning is to minimise error over a set of samples while disregarding information conveyed by the input. Previous work has recently termed this network function the optimal constant solution (OCS) \citep{kang_deep_2024}. The OCS can be formalised as \(\hat{\mathbf{y}}_{ocs} = \argmin_{\hat{\mathbf{y}} \in \mathbb{R}^{N_{out}}} \frac{1}{N} \sum_{i=1}^{N} \mathcal{L}(\hat{\mathbf{y}}, \mathbf{y}_i)\) and represents the optimal function \(\hat{\mathbf{y}}\) that is independent of input \(\mathbf{x}_i\). For mean-squared error,  it is straightforward to show that the minimiser is the average output  \(\hat{\mathbf{y}}_{ocs} = \frac{1}{N} \sum_i^N \mathbf{y}_i \eqqcolon \bar{\mathbf{y}}\).  

\textbf{Setup.} We train linear networks and Convolutional neural networks (CNN) on the hierarchical learning task in \cref{fig:Task-and-setting}. For CNNs we design a variation of the task whereby one-hot inputs are replaced with eight randomly sampled classes of MNIST \citep{li_deng_mnist_2012}. We use standard uniform Xavier initialization \citep{glorot_understanding_2010} and trained CNNs on an squared error loss. A full description of the CNN experiment and hyperparameter settings is deferred to \cref{app:CNN-experiments}. We there also replicate our results with CIFAR-10 \citep{krizhevsky_learning_nodate}, non-hierarchical tasks, alternative loss functions, and CelebA \citep{liu_deep_2015} in \cref{app:additional experiments}. We also show results for shallow networks in \cref{app:shallow-OCS}.

\subsection{Empirical evidence}
\label{sec:empirical-evidence}
We identify three separate empirical observations that support early learning of the OCS:

\paragraph{Indifference.} Linear networks and CNNs initially change outputs while not differentiating between input examples. In \cref{fig:outputs} (top) we show the empirical and analytical activation of an output unit associated with the highest level of the hierarchy for all \(\mathbf{x}_i\). Networks with and without bias terms learn to differentiate inputs correctly. However, networks with bias terms produce input-independent, non-zero outputs in early training as would be expected under the OCS. 

\paragraph{Performance.} We provide continuous correct-rejection scores for our task which approach levels expected under the OCS in \cref{fig:outputs} (bottom). Effectively, this metric measures wrong beliefs about the presence of target labels across the different levels of the hierarchy – it describes how strongly outputs \(\hat{\mathbf{y}}\) align with the desired outputs \(\mathbf{y}\) while focusing on zero entries only. We calculate the metric separately for the three hierarchical levels (details in \cref{app:continous-TPR-and-TNR-Rates}). Linear networks and CNNs with bias terms approach performance levels that would be provided by the OCS (dotted lines) for each of the three hierarchical levels. Linear network without bias terms do not produce this behaviour. 

\paragraph{OCS alignment.} The distance between outputs \(\hat{\mathbf{y}}_i\) of linear and non-linear networks and \(\mathbf{y}_{ocs}\) approaches zero in early training. \cref{fig:MNIST vs Orthogonal MNIST} (top)  shows how the \(L_1\) distance of sample-averaged network outputs and the OCS approaches zero before later converging to the desired network function.

\subsection{Theoretical explanation} \label{sec:theory_OCS}

In this section, we extend the linear network formalism to understand the mechanism behind early learning of the OCS. We first show how bias terms in the input layer are directly related to the OCS. Afterwards, we prove that the OCS is necessarily learned first in these settings.

\paragraph{The OCS is linked to shared properties.}
Having established the applicability of the linear network theory in \cref{sec:early-dyn-bias} we now seek to understand how the early bias towards the OCS emerges. To this end, notice how bias terms can be written in terms of the constant eigenmode \(\1\):

\begin{restatable}[The OCS is linked to shared properties]{proposition}{ocsonemain}
\label{thm:one_and_avg} If $\1$ is an eigenvector to the similarity matrix \(\mathbf{X}^{T}\mathbf{X}\in\mathbb{R}^{N\times N}\),
then the sample-average \(\bar{\mathbf{x}}=\frac{1}{N}\sum_{i=1}^{N}\mathbf{x}_{i}\)
will be an eigenvector to the correlation matrix \(\mathbf{X}\mathbf{X}^{T}\in\mathbb{R}^{N_{\text{in}}\times N_{\text{in}}}\)
with identical eigenvalue $\lambda$. An analogous statement applies
for  \(\mathbf{Y}^{T}\mathbf{Y}\) and \(\mathbf{Y}\mathbf{Y}^{T}\). The converse does not hold true in general. 
\end{restatable}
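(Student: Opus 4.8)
The plan is to exploit the elementary identity $\bX\1 = \sum_{i=1}^{N}\mathbf{x}_{i} = N\bar{\mathbf{x}}$, which already says that the constant mode $\1$ and the sample mean $\bar{\mathbf{x}}$ are intertwined by the data matrix itself. Concretely, suppose $\bX^{T}\bX\,\1 = \lambda\1$. Left-multiplying this eigenvalue equation by $\bX$ gives $\bX\bX^{T}(\bX\1) = \lambda(\bX\1)$; substituting $\bX\1 = N\bar{\mathbf{x}}$ and dividing by $N$ yields $\bX\bX^{T}\bar{\mathbf{x}} = \lambda\bar{\mathbf{x}}$, which is exactly the claim. The only bookkeeping point is non-degeneracy: contracting the hypothesis with $\1^{T}$ gives $\lambda N = \|\bX\1\|^{2} = N^{2}\|\bar{\mathbf{x}}\|^{2}$, so $\lambda = N\|\bar{\mathbf{x}}\|^{2}\ge 0$ with $\lambda=0$ exactly when $\bar{\mathbf{x}}=\mathbf{0}$; in that degenerate case there is nothing to prove, so one may assume $\bar{\mathbf{x}}\neq\mathbf{0}$, making it a genuine eigenvector. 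The analogous statement for $\bY^{T}\bY$ and $\bY\bY^{T}$ follows by running the identical argument with $\bY$ in place of $\bX$, now using $\bY\1 = N\bar{\mathbf{y}} = N\,\hat{\mathbf{y}}_{ocs}$ — this is the step that ties the constant mode to the optimal constant solution.

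For the converse, the point is that the forward implication only used $\bX$ acting on one side, and reversing it requires cancelling $\bX$ on the left, which fails as soon as $\bX$ has a nontrivial kernel. I would make this precise by observing that $\bX\bX^{T}\bar{\mathbf{x}} = \lambda\bar{\mathbf{x}}$ gives $\bX(\bX^{T}\bX\1 - \lambda\1) = \mathbf{0}$, so $\bX^{T}\bX\1 = \lambda\1$ holds iff $\bX^{T}\bX\1 - \lambda\1 \in \ker\bX$ — automatic when $\bX$ has a left inverse, but not in general. A minimal explicit counterexample is $N_{in}=1$, $N=2$, $\bX = \begin{bmatrix}1 & 2\end{bmatrix}$: here the correlation matrix $\bX\bX^{T}=5$ is $1\times 1$, so $\bar{\mathbf{x}}=3/2\neq 0$ is trivially an eigenvector of it, yet $\bX^{T}\bX\,\1 = \begin{bmatrix}3 & 6\end{bmatrix}^{T}$ is not a multiple of $\1$, so $\1$ is not an eigenvector of $\bX^{T}\bX$ at all. (Indeed $\bX^{T}\bX\,\1 - 5\,\1 = \begin{bmatrix}-2 & 1\end{bmatrix}^{T}$ lies in $\ker\bX$, exactly as the general analysis predicts.)

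The calculation itself is routine; the only things to be careful about are the degenerate $\lambda=0$ / $\bar{\mathbf{x}}=\mathbf{0}$ case in the forward direction, and, for the converse, picking a counterexample that also transparently exhibits the mechanism of failure — a kernel direction of $\bX$ absorbing the discrepancy $\bX^{T}\bX\1-\lambda\1$. This parallels the structure of \cref{thm:commute}, whose converse likewise requires $\bX$ to have a left inverse, and I would flag the connection explicitly.
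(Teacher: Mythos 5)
Your argument is correct and is essentially identical to the paper's proof, which likewise left-multiplies the eigenvalue equation by $\mathbf{X}$ and uses $\mathbf{X}\1 = N\bar{\mathbf{x}}$ in a one-line chain of equalities. Your additional care about the degenerate $\bar{\mathbf{x}}=\mathbf{0}$ case and your explicit counterexample for the converse go beyond the paper, which asserts the failure of the converse without justification.
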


We prove this statement in \cref{app:ocsone}. Importantly, it establishes a connection between the feature and sample dimensions of $\mathbf{X}$ and $\mathbf{Y}$. If $\1$ is an eigenvector to $\bX^{T}\bX$ and $\bY^{T}\bY$ already, it implies that the addition of a bias term will directly add to its eigenvalue, \(s_{ocs}^{2}\rightarrow s_{ocs}^{2}+1\), even if it is initialized at zero. We show in \cref{app:hierarchical_one} that these assumptions on $\mathbf{X}$ and $\mathbf{Y}$ hold strictly for our hierarchical task design, and more generally relate to symmetry in the data (\cref{app:one_in_XX_YY_general}). We discuss in \cref{sec:input_corrs} how this property extends to natural datasets where strict symmetry is absent. 

Crucially, it now follows from \cref{thm:one_and_avg} that the time-dependent network correlation \(\hat{\mathbf{\Sigma}}^{yx}(t)=\mathbf{UA}(t)\mathbf{V}^{T}\) in \cref{eq:time-dep-svd} will contain a strongly amplified OCS mode $a_{ocs}(t)\mathbf{u}_{ocs}\mathbf{v}_{ocs}^{T}=a_{ocs}(t)\bar{\mathbf{y}}\bar{\mathbf{x}}^{T}$ by virtue of the modified singular value $\sqrt{s_{ocs}^{2}+1}$ entering \cref{eq:time-dep-svd} and thereby the network function.
Consequently, learning dynamics will be driven by the outer product of average input and output data. Moreover, this implies that given some input $\mathbf{x}_{i}$ to \cref{eq:time-dep-svd},
the network's OCS mode contributes 
\begin{equation}
\label{eq:prop_y_bar}
\mathbf{\hat y}_{ocs}(\mathbf{x}_{i}) = a_{ocs}(t)\mathbf{u}_{ocs}\mathbf{v}_{ocs}^{T}\mathbf{x}_{i} = a_{ocs}(t)\bar{\mathbf{y}}\bar{\mathbf{x}}^{T}\mathbf{x}_{i}\propto\bar{\mathbf{y}}.
\end{equation}
The OCS mode in the time-dependent network function will hence necessarily drive responses towards average
output statistics. Note that \cref{eq:prop_y_bar} also
highlights that the more an input example is aligned to average
inputs, the more the network's responses will reflect average outputs. In particular, this makes the expected output \(\mathbb{E}_{\mathbf{x}}[\hat{\mathbf{y}}(\mathbf{x})] \propto \bar{\mathbf{y}}\). Throughout learning, the evolution of $a_{ocs}(t)$
and scale-dependent alignment of $\mathbf{x}_{i}$ and $\bar{\mathbf{x}}$
will determine the network's reliance on the OCS mode. 

\paragraph{Early learning is biased by the OCS mode.} We established that network responses are driven by average
output statistics \(\bar{\mathbf{x}}\) and $\bar{\mathbf{y}}$, but why are \emph{early}
dynamics in particular influenced by the OCS? The learning speed of
the SVD modes in the time-dependent network function are controlled
by the magnitude of singular values $s_{\alpha}$ as seen in \cref{eq:trajec_a}. 
\begin{restatable}[Early learning is biased by the OCS mode]{theorem}{leadingmain}
    \label{thm:leading_ev}
    If \(\1\) is a joint non-degenerate eigenvector to positive input and output similarity matrices $\bX^T\bX$ and $\bY^T\bY$, the OCS mode $s_{ocs} \bar{\mathbf{y}} \bar{\mathbf{x}}^T$ will have \textit{leading} spectral weight $s_0 \equiv s_{ocs}$ in the SVD of the input-output correlation matrix $\mathbf{\Sigma}^{yx}$.
\end{restatable}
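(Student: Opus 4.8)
The plan is to first pin down $\bar{\mathbf{x}}$ and $\bar{\mathbf{y}}$ as a matched left/right singular pair of $\mathbf{\Sigma}^{yx}$ and read off the associated singular value, and then to show that this value dominates the entire spectrum of $\mathbf{\Sigma}^{yx}$, so it must be the leading one.

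\textbf{Step 1: exhibit the OCS singular pair.} Write $\lambda_x$, $\lambda_y$ for the eigenvalues of $\1$ under $\bX^T\bX$ and $\bY^T\bY$. Since $\bX\1 = N\bar{\mathbf{x}}$, the eigenrelation $\bX^T\bX\,\1 = \lambda_x\1$ rearranges to $\bX^T\bar{\mathbf{x}} = (\lambda_x/N)\1$; substituting into $\mathbf{\Sigma}^{yx} = \tfrac1N\bY\bX^T$ gives $\mathbf{\Sigma}^{yx}\bar{\mathbf{x}} = (\lambda_x/N)\bar{\mathbf{y}}$, and symmetrically $\mathbf{\Sigma}^{yx\,T}\bar{\mathbf{y}} = (\lambda_y/N)\bar{\mathbf{x}}$. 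Contracting $\bX^T\bar{\mathbf{x}} = (\lambda_x/N)\1$ with $\1$ yields $\|\bar{\mathbf{x}}\|^2 = \lambda_x/N$ and likewise $\|\bar{\mathbf{y}}\|^2 = \lambda_y/N$, so the unit vectors $\bar{\mathbf{x}}/\|\bar{\mathbf{x}}\|$ and $\bar{\mathbf{y}}/\|\bar{\mathbf{y}}\|$ form a right/left singular pair of $\mathbf{\Sigma}^{yx}$ with singular value $s_{ocs} = \sqrt{\lambda_x\lambda_y}/N$ — equivalently, $\bar{\mathbf{y}}$ is an eigenvector of $\mathbf{\Sigma}^{yx}\mathbf{\Sigma}^{yx\,T}$ with eigenvalue $\lambda_x\lambda_y/N^2$. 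This is exactly the mode $s_{ocs}\,\bar{\mathbf{y}}\bar{\mathbf{x}}^T$ up to normalisation, and \cref{thm:one_and_avg} is what licenses treating $\bar{\mathbf{x}}$, $\bar{\mathbf{y}}$ as the relevant singular directions.

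\textbf{Step 2: bound the whole spectrum.} For an arbitrary unit $\mathbf{v}\in\mathbb{R}^{N_{in}}$, set $\mathbf{w} = \bX^T\mathbf{v}$ and estimate $\|\mathbf{\Sigma}^{yx}\mathbf{v}\|^2 = \tfrac1{N^2}\mathbf{w}^T\bY^T\bY\mathbf{w} \le \tfrac1{N^2}\lambda_{\max}(\bY^T\bY)\,\mathbf{v}^T\bX\bX^T\mathbf{v} \le \tfrac1{N^2}\lambda_{\max}(\bY^T\bY)\,\lambda_{\max}(\bX^T\bX)$, using that $\bX\bX^T$ and $\bX^T\bX$ share their largest eigenvalue. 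Hence every singular value of $\mathbf{\Sigma}^{yx}$ is at most $\sqrt{\lambda_{\max}(\bX^T\bX)\,\lambda_{\max}(\bY^T\bY)}/N$.

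\textbf{Step 3: close the gap — the main obstacle.} What remains is to know that $\lambda_x$ and $\lambda_y$ are the \emph{largest} eigenvalues of their matrices, not merely eigenvalues; this is where I expect the real work and where the "positive'' and "non-degenerate'' hypotheses are used. The similarity matrices $\bX^T\bX$ and $\bY^T\bY$ are entrywise non-negative (Gram matrices of non-negative data vectors), and strictly positive once a bias coordinate is appended as in \cref{sec:early-dyn-bias}; Perron--Frobenius then says their spectral radius is a simple eigenvalue whose eigenvector can be chosen strictly positive, and it is the only eigenvalue admitting a non-negative eigenvector. Since $\1 > 0$ is an eigenvector, its eigenvalue must be the spectral radius, i.e.\ the largest eigenvalue, and Perron simplicity is precisely the non-degeneracy assumption. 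Chaining the three steps, $s_{ocs}^2 = \lambda_x\lambda_y/N^2 = \lambda_{\max}(\bX^T\bX)\lambda_{\max}(\bY^T\bY)/N^2 \ge s_{\max}(\mathbf{\Sigma}^{yx})^2 \ge s_{ocs}^2$, so all are equal and $s_{ocs}$ is the leading singular value; non-degeneracy forces the Step 2 inequality to be strict for any direction orthogonal to $\bar{\mathbf{x}}$, so the OCS mode is the unique leading one. For the concrete hierarchical task one could bypass Perron--Frobenius and instead quote the appendix fact that appending a bias term boosts the $\1$-eigenvalue enough to make it dominate.
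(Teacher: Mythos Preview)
Your proof is correct and hinges on the same key ingredient as the paper's: Perron--Frobenius applied to the entrywise-positive similarity matrices forces $\1$ to be the Perron eigenvector, so $\lambda_x,\lambda_y$ are the spectral radii. The difference is in how the link from $\bX^T\bX,\bY^T\bY$ to the SVD of $\mathbf{\Sigma}^{yx}$ is made. The paper's proof is terse: it asserts that the eigenvectors of $\bX\bX^T$ and $\bY\bY^T$ \emph{are} the right and left singular vectors of $\mathbf{\Sigma}^{yx}$, which is only true under the simultaneous-diagonalisation hypothesis of \cref{thm:commute} (commutativity of $\bX^T\bX$ and $\bY^T\bY$), and then reads off the leading mode directly. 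Your Steps~1--2 avoid this dependence: you verify by hand that $(\bar{\mathbf{x}},\bar{\mathbf{y}})$ is a singular pair with value $\sqrt{\lambda_x\lambda_y}/N$ and then use a variational bound $\|\mathbf{\Sigma}^{yx}\mathbf{v}\|^2\le\lambda_{\max}(\bY^T\bY)\lambda_{\max}(\bX^T\bX)/N^2$ to cap the whole spectrum. This buys you a proof that does not need \cref{thm:commute} and works even when $\bX^T\bX$ and $\bY^T\bY$ do not commute, at the cost of a few extra lines; the paper's version is shorter but tacitly sits inside the closed-form-dynamics framework.
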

We prove this statement with help of the Perron-Frobenius theorem \citep{perron_zur_1907} in \cref{app:leading_ev}.    
Consequently,
the optimal constant mode is learned at a faster rate than remaining SVD components and transiently dominates the early network function. 
Notably, this applies to our task data $\bY^{T}\bY$ (see \cref{app:hierarchical_one}) and leads to characteristic learning signatures observed in
\cref{fig:outputs}.

\paragraph{Integrated formulation of architectural biases.}

\begin{wrapfigure}{r}{0.3\textwidth}
  \begin{center}
\includegraphics[width=0.3\textwidth]{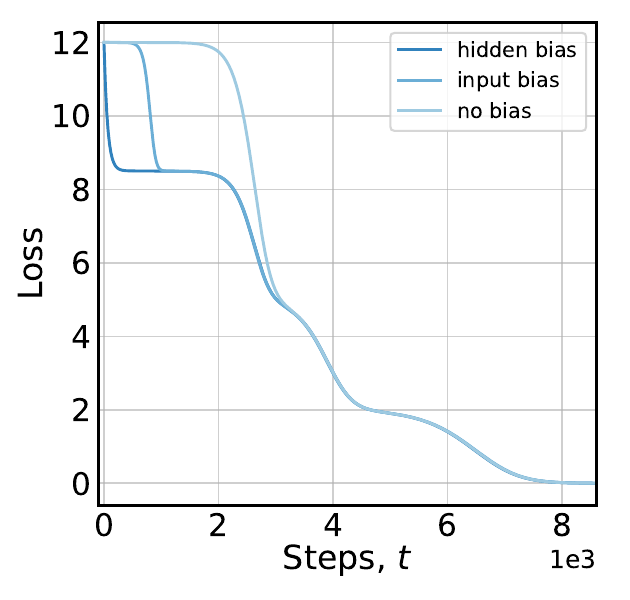}
  \end{center}
  \caption{Loss curves for different bias variations.}
  \label{fig:in-out-biases}
\end{wrapfigure}

\label{sec:NTK}
So far, we analysed how bias terms on the \textit{input} layer affect the singular value spectrum. 
Our empirical results in \cref{sec:empirical-evidence} suggest a more general dynamical bias towards the OCS stemming purely from architectural properties. We here use the neural tangent kernel 
$\NTK(\mathbf{x}_i,\mathbf{x}_{i'})=\sum_k\frac{d\hat{\mathbf{y}}_{i^{\phantom{\prime}}}^{\phantom{T}}}{d\theta_{k}}\frac{d\hat{\mathbf{y}}_{i^{\prime}}^{T}}{d\theta_{k}}$ 
\citep{jacot_neural_2018} to directly and comprehensively describe the affected time evolution of the network response $\frac{d}{dt}\hat{\mathbf{y}}_i = \NTK (\mathbf{y}_i-\hat{\mathbf{y}}_i)$ 
at the cost of a closed-form solution. Because changes in network outputs are proportional to the NTK it can been viewed as an architecture-induced learning rate \citep{roberts_principles_nodate}. 
For completeness, we now consider a network that contains input  $\mathbf{b}^1$ and output $\mathbf{b}^2$ bias terms. 
\begin{restatable}[NTK of linear networks with bias terms]{proposition}{NTKmain}
\label{thm:NTKmain}
\label{eq:NTK}
The neural tangent kernel of a two-layer linear network with input and output-layer bias $\hat{\mathbf{Y}}=\mathbf{W}^2 (\mathbf{W}^1\mathbf{X} + \mathbf{b}^1) + \mathbf{b}^2$ in the high-dimensional regime in early training reads
\vspace{-.5mm}
\begin{align}
\nonumber 
\mathsf{NTK}(\mathbf{X},\mathbf{X})=\sigma_{\mathbf{W}^2}^{2}\mathbf{I}_{N_{out}}\otimes\Bigl(2\mathbf{X}^{T}\mathbf{X}+\underbrace{\1\1^T}_{\leftrightarrow \mathbf{b}^{1}}\Bigr)+\mathbf{1}_{N_{out}}\mathbf{1}_{N_{out}}^{T}\otimes\underbrace{\1\1^T}_{\leftrightarrow \mathbf{b}^{2}}.
\end{align}
\end{restatable}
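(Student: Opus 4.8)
The plan is to evaluate the kernel straight from its definition, splitting the parameter sum over the four blocks $\mathbf{W}^1,\mathbf{W}^2,\mathbf{b}^1,\mathbf{b}^2$, assembling the four Gram contributions as Kronecker products, and then simplifying them in the wide-network / early-training limit.

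\emph{Jacobians and Gram pieces.} Write the per-sample map as $\hat{\mathbf{y}}_i=\mathbf{W}^2\mathbf{W}^1\mathbf{x}_i+\mathbf{W}^2\mathbf{b}^1+\mathbf{b}^2$ and set $\mathbf{h}_i=\mathbf{W}^1\mathbf{x}_i+\mathbf{b}^1$, with $\mathbf{H}=\mathbf{W}^1\mathbf{X}+\mathbf{b}^1\1^{T}$. The four Jacobians in index form are $\partial\hat y_{i,a}/\partial W^1_{de}=W^2_{ad}x_{i,e}$, $\partial\hat y_{i,a}/\partial W^2_{de}=\delta_{ad}h_{i,e}$, $\partial\hat y_{i,a}/\partial b^1_d=W^2_{ad}$, and $\partial\hat y_{i,a}/\partial b^2_d=\delta_{ad}$. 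Contracting each Jacobian with itself over its own parameter indices, the four contributions to $\NTK_{(a,i),(a',i')}$ are, respectively, $(\mathbf{W}^2\mathbf{W}^{2T})_{aa'}(\mathbf{X}^T\mathbf{X})_{ii'}$, $\delta_{aa'}(\mathbf{H}^T\mathbf{H})_{ii'}$, $(\mathbf{W}^2\mathbf{W}^{2T})_{aa'}$, and $\delta_{aa'}$; the last two are constant in $i,i'$, i.e. carry a $\1\1^{T}$ factor in the sample block. Equivalently, $\NTK=\mathbf{W}^2\mathbf{W}^{2T}\otimes(\mathbf{X}^T\mathbf{X}+\1\1^{T})+\mathbf{I}_{N_{out}}\otimes(\mathbf{H}^T\mathbf{H}+\1\1^{T})$, which already exhibits the constant mode $\1\1^{T}$ contributed by each bias.

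\emph{Taking the limit.} Early in training the bias $\mathbf{b}^1$, initialised at zero, is still negligible, so $\mathbf{H}^T\mathbf{H}\approx\mathbf{X}^T\mathbf{W}^{1T}\mathbf{W}^1\mathbf{X}$. In the high-dimensional regime $N_{hid}\gg N_{in},N_{out}$, the Gram matrices of the still-near-initialisation weights concentrate onto multiples of the identity, $\mathbf{W}^2\mathbf{W}^{2T}\to\sigma_{\mathbf{W}^2}^{2}\mathbf{I}_{N_{out}}$ and $\mathbf{W}^{1T}\mathbf{W}^1\to\sigma_{\mathbf{W}^2}^{2}\mathbf{I}_{N_{in}}$, the two effective variances coinciding under the matched Xavier scaling. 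Hence $\mathbf{H}^T\mathbf{H}\to\sigma_{\mathbf{W}^2}^{2}\mathbf{X}^T\mathbf{X}$, so the $\mathbf{W}^1$- and $\mathbf{W}^2$-blocks each contribute $\sigma_{\mathbf{W}^2}^{2}\mathbf{I}_{N_{out}}\otimes\mathbf{X}^T\mathbf{X}$ (producing the factor $2$), the $\mathbf{b}^1$-block contributes $\sigma_{\mathbf{W}^2}^{2}\mathbf{I}_{N_{out}}\otimes\1\1^{T}$, and the $\mathbf{b}^2$-block the remaining $\1\1^{T}$ term in the sample block; collecting the four pieces yields the stated expression (the output-space factor of the $\mathbf{b}^2$ piece being $\mathbf{I}_{N_{out}}$ for a per-unit output bias, or $\mathbf{1}_{N_{out}}\mathbf{1}_{N_{out}}^{T}$ for a single scalar output bias).

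The main obstacle is turning the two limits into precise statements: (i) a concentration bound showing $\mathbf{W}^{1T}\mathbf{W}^1$ and $\mathbf{W}^2\mathbf{W}^{2T}$ equal $\sigma_{\mathbf{W}^2}^{2}\mathbf{I}+o(1)$ in operator norm as $N_{hid}\to\infty$ (standard for sub-Gaussian Xavier weights), plus the bookkeeping of which normalisation the surviving constant $\sigma_{\mathbf{W}^2}^{2}$ refers to (per-entry variance versus its $N_{hid}$-rescaled value, depending on parametrisation); and (ii) quantifying how long ``early training'' lasts, i.e. that $\mathbf{b}^1(t)$ and the weight increments $\mathbf{W}^\ell(t)-\mathbf{W}^\ell(0)$ stay small enough for the replacements above to hold — which is exactly where the gradient-flow analysis of \cref{sec:early-dyn-bias} and the small-initialisation assumption enter. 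The Kronecker bookkeeping is otherwise routine.
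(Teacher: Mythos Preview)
Your approach is essentially identical to the paper's: compute the four Jacobians with respect to $\mathbf{W}^1,\mathbf{W}^2,\mathbf{b}^1,\mathbf{b}^2$, contract them into Gram pieces, assemble the result as Kronecker products, and then replace $\mathbf{W}^{1T}\mathbf{W}^1$ and $\mathbf{W}^2\mathbf{W}^{2T}$ by $\sigma_{\mathbf{W}}^{2}\mathbf{I}$ under the high-dimensional/early-training assumption with $\mathbf{b}^1(0)=0$. Your parenthetical about the $\mathbf{b}^2$ output-space factor is well taken: the paper's derivation carries this contribution as $\mathbf{1}_{N_{out}}\mathbf{1}_{N_{out}}^{T}$, whereas your computation correctly gives $\mathbf{I}_{N_{out}}$ for a per-unit vector bias $\mathbf{b}^2\in\mathbb{R}^{N_{out}}$.
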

\vspace{-2mm}
The tensor product $\otimes$ separates the components that operate on output and sample space. 
We briefly review the $\NTK$ and derive this expression in \cref{app:NTK}. The highlighted terms originate from the bias term 
$\frac{d\hat{\mathbf{y}}_{i^{\phantom{\prime}}}^{\phantom{T}}}{d\mathbf{b}}\frac{d\hat{\mathbf{y}}_{i^{\prime}}^{T}}{d\mathbf{b}}$ 
entering the $\NTK$, manifesting in the appearance of the constant mode $\1$. Importantly, these terms do not scale with the size of the learned bias – they are present even if the bias is initialized at zero. Intuitively, their contribution stems from the architecture's \textit{potential} to learn a bias, enabling rapid changes in output $\hat{\mathbf{y}}$.
The $\NTK$ also reveals a qualitative difference between input and output bias: Whereas the term that is induced by $\mathbf{b}^1$ shows attenuated growth due to the multiplication by the weights of initial scale $\sigma_{\mathbf{W}^2}\ll 1$, the output bias $\mathbf{b}^2$ immediately changes the output significantly. Loss curves which demonstrate the effect of different bias terms are displayed in \cref{fig:in-out-biases}.

To recapitulate this section: We first rephrased a learnable bias term in the architecture as a shared feature in the input data. We then found that the associated singular value in \cref{eq:time-dep-svd} drives the learned network function towards the OCS (\cref{eq:prop_y_bar}). Finally, we proved that the bias affects \textit{early} learning. We furthermore show that this tendency is even more pronounced for bias terms in later layers of the network. Overall, these results demonstrate how architectural bias terms induce early OCS learning. 

\section{Correspondence of linear networks, complex models, and humans}
\label{sec:learners}

\begin{figure}
    \includegraphics[width=1\linewidth]{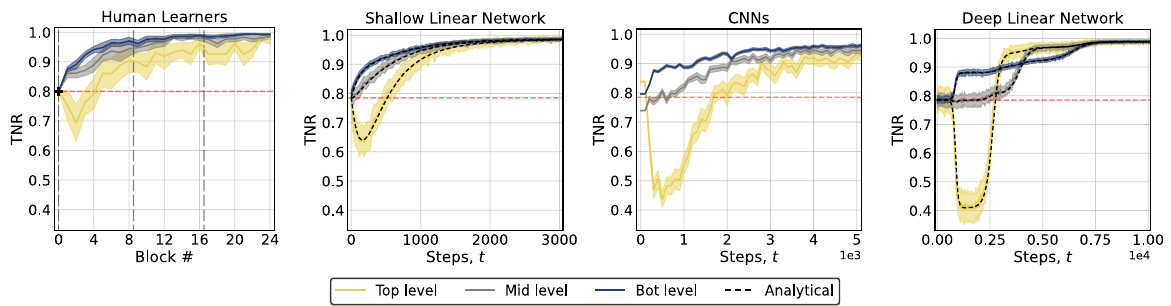}
    \centering
    \caption{Early response bias towards the OCS across learners in the hierarchical learning task. Dashed red line indicates chance performance. Dashed vertical grey lines indicates breaks between days for human learners. Colour code as in \cref{fig:outputs}.}
    \label{fig:Human-net-correspondence}
\end{figure}

In this section, we demonstrate how human learners, linear networks, and non-linear architectures show strong correspondences in their early learning on the hierarchical task displayed in \cref{fig:Task-and-setting}.

\textbf{Setup.} The hierarchical learning task has previously been used extensively in the study of semantic cognition \citep{rogers_semantic_2004} and requires learners to develop a hierarchical one-to-many mapping as seen in \cref{fig:Task-and-setting}B. We adapted the task for human learners while maintaining the underlying structure: Input stimuli were represented as different classes of planets and output labels were represented as a set of plant images (see \cref{fig:Task-and-setting}E and \cref{fig:human_task_supplement}). We also trained CNNs as in \cref{sec:early-dyn-ocs}. Importantly, the hierarchical structure results in a non-uniform distribution of labels with average labels equal to \(\mathbf{y}_{ocs}\). 
Human learners received supervised training over three days. A full description of the experimental paradigm is given \cref{app:human-experiment}.  We then compute correct-rejection scores (TNR) as in \cref{sec:early-dyn-ocs} while splitting performance across the hierarchical levels (details of metric in \cref{app:continous-TPR-and-TNR-Rates}). 

Neural networks produced continuous outputs in \(\mathbb{R}^{N_{out}}\)  while humans responded via discrete button clicks in \(\{0,1\}^{N_{out}}\) . To compare networks to humans we discretize network outputs by stochastically mapping continuous responses into \(\{0,1\}^{N_{out}}\). Procedure details are given in \cref{app:discretization}. 

\textbf{Results.} The key results of our experiments are presented in \cref{fig:Human-net-correspondence}. Intriguingly, we find that human learners, linear networks, and CNNs all display characteristic early response biases. Note that chance TNR is equal between all three levels of the hierarchy. Biological as well as artificial learners display an initial "drop" in TNR at the top level of the output hierarchy.  The result indicates a general lack of specificity and an overly liberal response criterion for output labels on the top level of the hierarchy. 
To appreciate the significance of this result it is important to understand that the task can be learned without the development of these early response biases: In particular, linear networks without bias terms do not show this behaviour (see \cref{app:TNR-TPR-no-bias}). Surprisingly, the human response signature demonstrates that these learners, just as artifical networks, display an early bias towards the OCS. We conjecture that early learning of the OCS might be a universal phenomenon that emerges during error-corrective training. We replicate the human result with a second cohort of learners in  \cref{app:human-experiment}. 
Notable is also the difference between shallow and deep linear networks. Response biases seem more transient in shallow networks and appear to more closely mirror human learners. However, quantitative comparisons are challenging due to inherently differing learning timescales.

\section{Generic input correlations can equivalently drive OCS learning}\label{sec:input_corrs}

We have established how the earliest phase of learning in linear networks is driven by the OCS. Crucially, in linear networks OCS learning hinges on bias terms in the network architecture. However, in non-linear architectures, such as CNNs, the network is driven towards the OCS even in the absence of bias terms (\cref{fig:MNIST vs Orthogonal MNIST}B, Top).  The appearance of the data term $\bX^T\bX$ in \cref{thm:NTKmain} suggests an equivalent effect that is induced by the data itself.
\begin{restatable}[Input correlations induce early OCS]{corollary}{inputcorrmain}
\label{thm:inputcorrmain}
If $\1$ is an eigenvector of the data similarity matrix $\bX^T\bX$ with non-degenerate eigenvalue $s_0$, then the OCS response during early learning will be driven according to its magnitude. 
\end{restatable}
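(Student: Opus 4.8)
The plan is to read the conclusion off the neural tangent kernel of \cref{thm:NTKmain}. In the bias-free regime relevant here (the setting in which the CNN experiments already exhibit OCS learning), setting $\mathbf{b}^1=\mathbf{b}^2=0$ deletes the two highlighted $\1\1^T$ blocks and leaves $\NTK = \sigma_{\mathbf{W}^2}^{2}\,\mathbf{I}_{N_{out}}\otimes 2\bX^T\bX$. Using the output evolution $\frac{d}{dt}\hat{\mathbf{y}}_i = \NTK(\mathbf{y}_i-\hat{\mathbf{y}}_i)$ together with small initialization ($\hat{\mathbf{y}}_i(0)\approx 0$), the idea is to project these dynamics onto the constant sample direction $\1$ and track the sample-averaged network output $\overline{\hat{\mathbf{y}}}(t)=\frac{1}{N}\sum_i\hat{\mathbf{y}}_i(t)$, which by \cref{eq:prop_y_bar} is the natural readout of the input-independent (OCS) component of the network function.

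The key step is the spectral identity for $\1$. By hypothesis $\bX^T\bX\,\1 = s_0\,\1$, and since $\bX^T\bX$ is symmetric, $\1$ is simultaneously a left eigenvector, so the constant sample mode decouples: to leading order near $t=0$ the sample mean obeys $\frac{d}{dt}\overline{\hat{\mathbf{y}}} = 2\sigma_{\mathbf{W}^2}^{2}\,s_0\,(\bar{\mathbf{y}}-\overline{\hat{\mathbf{y}}})$, i.e.\ it relaxes toward $\bar{\mathbf{y}}=\hat{\mathbf{y}}_{ocs}$ at a rate proportional to $s_0$. Equivalently, invoking \cref{thm:one_and_avg}, the hypothesis makes $\bar{\mathbf{x}}$ an eigenvector of $\mathbf{\Sigma}^{x}$ with eigenvalue $s_0/N$, whence $\mathbf{\Sigma}^{yx}\bar{\mathbf{x}} = \frac{1}{N^2}\bY(\bX^T\bX)\1 = \frac{s_0}{N^2}\bY\1 = \frac{s_0}{N}\bar{\mathbf{y}}$, so the corresponding mode propagates through the network response exactly as in \cref{eq:prop_y_bar}, contributing a term $\propto s_0\,\bar{\mathbf{y}}$ to $\hat{\mathbf{y}}(\mathbf{x}_i)$. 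Either way, the network's average response is the optimal constant solution, with amplitude and learning rate controlled by the magnitude of $s_0$, which is the assertion. If in addition $s_0$ is the \emph{leading} eigenvalue of $\bX^T\bX$ --- as holds for generically (e.g.\ entrywise) positively correlated data, by the Perron--Frobenius argument already used for \cref{thm:leading_ev} --- then the OCS mode also carries the largest $\NTK$ eigenvalue and hence the fastest initial growth, so it transiently dominates the early network function, recovering the architectural picture with the learnable $\1\1^T$ of \cref{thm:NTKmain} replaced by the data's own constant mode.

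The main obstacle I anticipate is interpretational rather than computational: the identities above characterise the $t\to 0$ gradient-flow vector field, whereas for a genuinely nonlinear network the $\NTK$ drifts during training and $\hat{\mathbf{y}}(0)$ is only approximately zero, so what is rigorously obtained is the \emph{initial} driving direction $\bar{\mathbf{y}}$ and its $s_0$-dependent rate --- precisely the ``early learning'' scope claimed by the corollary. A secondary subtlety is that, absent any assumption on $\bY^T\bY$, the vector $\bar{\mathbf{x}}$ need not be an exact right singular vector of $\mathbf{\Sigma}^{yx}$ (only $\mathbf{\Sigma}^{yx}\bar{\mathbf{x}}\propto\bar{\mathbf{y}}$ holds), so one cannot invoke the exact closed-form trajectories \cref{eq:trajec_a,eq:trajec_b} directly; keeping the argument at the level of the $\NTK$ and the first-order dynamics sidesteps this and is all the corollary requires.
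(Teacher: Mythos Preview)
Your proposal is correct and considerably more detailed than the paper's own justification, which consists of a single sentence: ``This statement follows directly from the joint diagonalisation of \cref{eq:sim-diag} and subsequent projection onto the OCS $\mathbf{y}_{ocs}$.'' The paper thus appeals directly to the SVD framework for $\mathbf{\Sigma}^{yx}$ and $\mathbf{\Sigma}^{x}$ and reads off the OCS mode, which is essentially your second route via \cref{thm:one_and_avg} and \cref{eq:prop_y_bar}. Your primary route through the bias-free NTK of \cref{thm:NTKmain} is a genuinely different and in some ways cleaner derivation: it needs only the eigenvector hypothesis on $\bX^T\bX$ and yields the decoupled ODE $\frac{d}{dt}\overline{\hat{\mathbf{y}}} = 2\sigma_{\mathbf{W}^2}^{2}s_0(\bar{\mathbf{y}}-\overline{\hat{\mathbf{y}}})$ without invoking joint diagonalisability or any assumption on $\bY^T\bY$. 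This makes explicit what ``driven according to its magnitude'' means (the rate constant is proportional to $s_0$) and sidesteps the subtlety you correctly flag, namely that $\bar{\mathbf{x}}$ need not be an exact right singular vector of $\mathbf{\Sigma}^{yx}$ under the corollary's hypothesis alone. The paper's one-line argument implicitly leans on the surrounding framework where $\1$ is also an eigenvector of $\bY^T\bY$; your NTK projection is the sharper match to the hypothesis as actually stated.
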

This statement follows directly from the joint diagonalisation of \cref{eq:sim-diag} and subsequent projection onto the OCS $\mathbf{y}_{ocs}$. We furthermore hypothesize that neural networks will be driven towards the OCS if training data contains more generic input correlations where $\1$ is not an exact eigenvector.

\textbf{Setup.} We trained CNNs on the hierarchical task in  \cref{sec:early-dyn-ocs}. Inputs were given by eight randomly sampled classes of MNIST (\cref{fig:MNIST vs Orthogonal MNIST}B, Top). To isolate the effect of input correlations we created a second dataset where randomly sampled classes of MNIST were copied on orthogonal subspaces of a larger image (\cref{fig:MNIST vs Orthogonal MNIST}B, bottom). Importantly, this procedure removes all between-class correlations.

\begin{figure}
\includegraphics[width=0.85\linewidth]{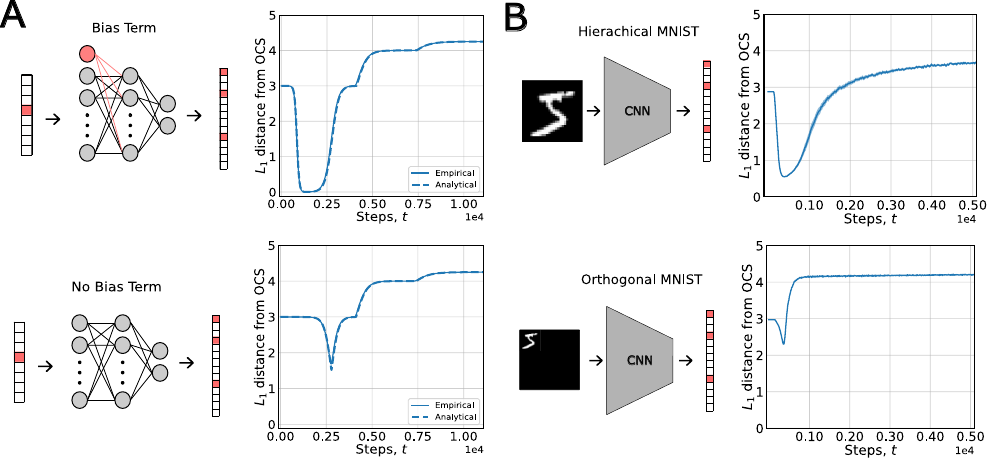}
    \centering
    \caption{Distance from the OCS. \textbf{A} Linear networks with (top) and without (bottom) bias terms. \textbf{B} CNNs without bias terms. \textit{Top} normal inputs. \textit{Bottom} "orthogonal" inputs.}
    \label{fig:MNIST vs Orthogonal MNIST}
\end{figure}

\textbf{Results.} The main result of our experiment is displayed in \cref{fig:MNIST vs Orthogonal MNIST}. CNNs which learn from standard MNIST images are strongly driven towards the OCS. In contrast, early dynamics for the "orthogonal" MNIST do not display this tendency. Strikingly, the early dynamics with standard MNIST classes are highly similar to those observed in linear network with bias terms, while the dynamics for the latter task resemble those seen in the linear network without this feature. 
To verify that generic input correlations are indeed causing these differences we explore the eigenspectrum of the data correlation matrices. We sample 100 images from all 10 classes and compute a correlation matrix \(\mathbf{X}^T\mathbf{X}\) from flattened images. First, we find that the eigenspectrum for standard MNIST images is dominated by a single eigenvector (\cref{fig:eigenspectra}, top-left). In contrast, the eigenspectrum of the orthogonal MNIST task does not display this property (\cref{fig:eigenspectra}, top-center). Further, recall that input bias terms lead to a non-degenerate constant eigenvector \(\mathbf{1}_N\) in the input correlation matrix (\cref{sec:early-dyn-ocs}). Similarly, we find that the first eigenvector \(\mathbf{v}_1\) of \(\mathbf{X}^T\mathbf{X}\) is indeed highly aligned to $\1$ (\cref{fig:eigenspectra}, right), whereas this is not the case in the orthogonal MNIST. We additionally show similar results for CIFAR-10 and CelebA. Theoretical considerations suggest that these correlations originate from an approximate symmetry in the data (see \cref{app:one_in_XX_YY_general}) and might hence be ubiquitous beyond these datasets. 

Overall, we here demonstrated that early learning of the OCS can be driven by properties of the architecture (bias terms) or data (input correlations). Our results also highlight that input correlations are a common feature of standard datasets: Early learning of the OCS might be inevitable in many settings. To see a practical implication of these results we briefly discuss fairness implications of OCS learning in \cref{app:linear-net-classimbalance}.

\begin{figure}[h]
    \includegraphics[width=0.85\linewidth]{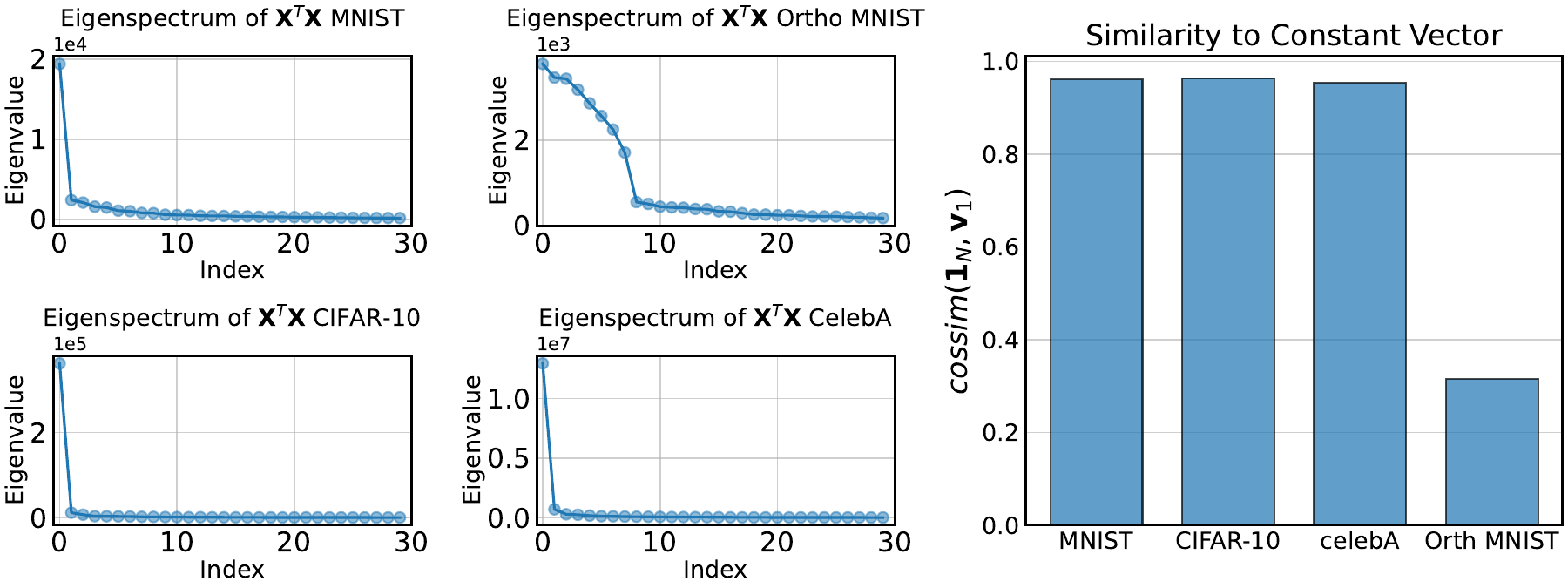}
    \centering
    \caption{Dataset eigenspectra and constancy of first eigenvector. \textit{Left:} Eigenspectra of \(\mathbf{X}^T\mathbf{X}\)  for different datasets. \textit{Right:} Alignment of first eigenvector in \(\bX^T\bX\) with the constant vector \(\mathbf{1}_N\).}
    \label{fig:eigenspectra}
\end{figure}

\section{Discussion}
We found that the inclusion of bias terms in linear networks shifts early learning towards the OCS, even when initialized at zero. We demonstrated that early, input-independent simplicity biases occur in practice, affecting both non-linear networks and human learners. Our contribution complements prior work on simplicity biases by highlighting factors that drive networks in the \textit{earliest} stages of learning; connecting input, output, and architecture. Our findings highlight how simple linear networks can serve as useful tools to investigate simplicity biases in significantly more complex systems.
A limitation of our work is the focus on qualitative comparisons between linear networks and non-linear systems. We chose linear networks to allow for a rigorous description of the dynamics of learning. Methods from mean-field theory may provide a more precise tool to analyze a wider range of systems directly. Secondly, the ambiguity between architecture and data in driving the OCS does not allow us to determine the underlying mechanism in human learners. Future studies might address this limitation by manipulating correlations in stimuli or by recording of neural data. Our proposed framework furthermore lends itself to theoretically studying the effect of noise on the network, for example by random matrix theory. We leave this investigation for future work.

\begin{ack}
We thank Satwik Bhattamishra, Aaditya Singh, Clémentine Dominé, Lukas Braun, Devon Jarvis, and Kevin Huang for useful feedback, discussions, and comments on earlier versions of this work.
This work was funded by a European Research Council (ERC) Consolidator Award (725937) to C.S., a Wellcome Trust Discovery Award (227928/Z/23/Z) to C.S., and a UKRI ESRC Grand Union Doctoral training partnership stipend awarded to J.R. This work was also supported by a Schmidt Science Polymath Award to A.S., and the Sainsbury Wellcome Centre Core Grant from Wellcome (219627/Z/19/Z) and the Gatsby Charitable Foundation (GAT3850). A.S. is a CIFAR Azrieli Global Scholar in the Learning in Machines \& Brains program.
\end{ack}

\section*{Author Contributions}
The conceptualisation of the project was developed by J.R. with C.S. and A.S. providing supervision. J.R. is responsible for all empirical results in networks, the human experiment, and initial theoretical ideas. J.B. and J.R. collaboratively devised the main theoretical results in the paper. 
J.B. primarily developed the theoretical presentation in the appendix. 
The initial draft was written by J.R. with further iterations and appendices written collaboratively by J.B. and J.R. All authors contributed to polishing of the draft and C.S. and A.S. provided supervision on all aspects of the project.
\newpage
\bibliography{References,bib_jan}

\begin{thebibliography}{56}
\providecommand{\natexlab}[1]{#1}
\providecommand{\url}[1]{\texttt{#1}}
\expandafter\ifx\csname urlstyle\endcsname\relax
  \providecommand{\doi}[1]{doi: #1}\else
  \providecommand{\doi}{doi: \begingroup \urlstyle{rm}\Url}\fi

\bibitem[Kalimeris et~al.(2019)Kalimeris, Kaplun, Nakkiran, Edelman, Yang, Barak, and Zhang]{kalimeris_sgd_2019}
Dimitris Kalimeris, Gal Kaplun, Preetum Nakkiran, Benjamin Edelman, Tristan Yang, Boaz Barak, and Haofeng Zhang.
\newblock {SGD} on {Neural} {Networks} {Learns} {Functions} of {Increasing} {Complexity}.
\newblock In \emph{Advances in {Neural} {Information} {Processing} {Systems}}, volume~32. Curran Associates, Inc., 2019.

\bibitem[Rahaman et~al.(2019)Rahaman, Baratin, Arpit, Draxler, Lin, Hamprecht, Bengio, and Courville]{rahaman_spectral_2019}
Nasim Rahaman, Aristide Baratin, Devansh Arpit, Felix Draxler, Min Lin, Fred Hamprecht, Yoshua Bengio, and Aaron Courville.
\newblock On the {Spectral} {Bias} of {Neural} {Networks}.
\newblock In \emph{Proceedings of the 36th {International} {Conference} on {Machine} {Learning}}, pages 5301--5310. PMLR, May 2019.
\newblock URL \url{https://proceedings.mlr.press/v97/rahaman19a.html}.
\newblock ISSN: 2640-3498.

\bibitem[Bhattamishra et~al.(2023)Bhattamishra, Patel, Kanade, and Blunsom]{bhattamishra_simplicity_2023}
Satwik Bhattamishra, Arkil Patel, Varun Kanade, and Phil Blunsom.
\newblock Simplicity {Bias} in {Transformers} and their {Ability} to {Learn} {Sparse} {Boolean} {Functions}, July 2023.
\newblock URL \url{http://arxiv.org/abs/2211.12316}.
\newblock arXiv:2211.12316 [cs].

\bibitem[Valle-Pérez et~al.(2019)Valle-Pérez, Camargo, and Louis]{valle-perez_deep_2019}
Guillermo Valle-Pérez, Chico~Q. Camargo, and Ard~A. Louis.
\newblock Deep learning generalizes because the parameter-function map is biased towards simple functions, April 2019.
\newblock URL \url{http://arxiv.org/abs/1805.08522}.
\newblock arXiv:1805.08522 [cs, stat].

\bibitem[Zhang et~al.(2021)Zhang, Bengio, Hardt, Recht, and Vinyals]{zhang_understanding_2021}
Chiyuan Zhang, Samy Bengio, Moritz Hardt, Benjamin Recht, and Oriol Vinyals.
\newblock Understanding deep learning (still) requires rethinking generalization.
\newblock \emph{Communications of the ACM}, 64\penalty0 (3):\penalty0 107--115, March 2021.
\newblock ISSN 0001-0782, 1557-7317.
\newblock \doi{10.1145/3446776}.
\newblock URL \url{https://dl.acm.org/doi/10.1145/3446776}.

\bibitem[Refinetti et~al.(2023)Refinetti, Ingrosso, and Goldt]{refinetti_neural_2023}
Maria Refinetti, Alessandro Ingrosso, and Sebastian Goldt.
\newblock Neural networks trained with {SGD} learn distributions of increasing complexity.
\newblock In \emph{Proceedings of the 40th {International} {Conference} on {Machine} {Learning}}, pages 28843--28863. PMLR, July 2023.
\newblock URL \url{https://proceedings.mlr.press/v202/refinetti23a.html}.
\newblock ISSN: 2640-3498.

\bibitem[Belrose et~al.(2024)Belrose, Pope, Quirke, Mallen, and Fern]{belrose_neural_2024}
Nora Belrose, Quintin Pope, Lucia Quirke, Alex Mallen, and Xiaoli Fern.
\newblock Neural {Networks} {Learn} {Statistics} of {Increasing} {Complexity}, February 2024.
\newblock URL \url{http://arxiv.org/abs/2402.04362}.
\newblock arXiv:2402.04362 [cs].

\bibitem[Braun et~al.(2022)Braun, Dominé, Fitzgerald, and Saxe]{braun_exact_2022}
Lukas Braun, Clémentine Dominé, James Fitzgerald, and Andrew Saxe.
\newblock Exact learning dynamics of deep linear networks with prior knowledge.
\newblock \emph{Advances in Neural Information Processing Systems}, 35:\penalty0 6615--6629, December 2022.
\newblock URL \url{https://proceedings.neurips.cc/paper_files/paper/2022/hash/2b3bb2c95195130977a51b3bb251c40a-Abstract-Conference.html}.

\bibitem[Saxe et~al.(2014)Saxe, McClelland, and Ganguli]{saxe_exact_2014}
Andrew~M. Saxe, James~L. McClelland, and Surya Ganguli.
\newblock Exact solutions to the nonlinear dynamics of learning in deep linear neural networks, February 2014.
\newblock URL \url{http://arxiv.org/abs/1312.6120}.
\newblock arXiv:1312.6120 [cond-mat, q-bio, stat].

\bibitem[Saxe et~al.(2019)Saxe, McClelland, and Ganguli]{saxe2019mathematical}
Andrew~M. Saxe, James~L. McClelland, and Surya Ganguli.
\newblock A mathematical theory of semantic development in deep neural networks.
\newblock \emph{Proceedings of the National Academy of Sciences}, 116\penalty0 (23):\penalty0 11537--11546, June 2019.
\newblock \doi{10.1073/pnas.1820226116}.
\newblock URL \url{https://www.pnas.org/doi/full/10.1073/pnas.1820226116}.
\newblock Publisher: Proceedings of the National Academy of Sciences.

\bibitem[Rogers and McClelland(2004)]{rogers_semantic_2004}
Timothy~T. Rogers and James~L. McClelland.
\newblock \emph{Semantic {Cognition}: {A} {Parallel} {Distributed} {Processing} {Approach}}.
\newblock MIT Press, 2004.
\newblock ISBN 978-0-262-18239-3.
\newblock Google-Books-ID: AmB33Uz2MVAC.

\bibitem[Rumelhart et~al.(1986)Rumelhart, McClelland, and Group]{rumelhart_parallel_1986}
David~E. Rumelhart, James~L. McClelland, and PDP~Research Group.
\newblock \emph{Parallel distributed processing, volume 1: {Explorations} in the microstructure of cognition: {Foundations}}.
\newblock The MIT press, 1986.
\newblock URL \url{https://scholar.google.com/scholar?cluster=13839636846206420541&hl=en&oi=scholarr}.

\bibitem[Kang et~al.(2024)Kang, Setlur, Tomlin, and Levine]{kang_deep_2024}
Katie Kang, Amrith Setlur, Claire Tomlin, and Sergey Levine.
\newblock Deep {Neural} {Networks} {Tend} {To} {Extrapolate} {Predictably}, March 2024.
\newblock URL \url{http://arxiv.org/abs/2310.00873}.
\newblock arXiv:2310.00873 [cs].

\bibitem[Herrnstein(1961)]{herrnstein_relative_1961}
R.~J. Herrnstein.
\newblock Relative and absolute strength of response as a function of frequency of reinforcement,.
\newblock \emph{Journal of the Experimental Analysis of Behavior}, 4\penalty0 (3):\penalty0 267--272, July 1961.
\newblock ISSN 0022-5002.
\newblock \doi{10.1901/jeab.1961.4-267}.
\newblock URL \url{https://www.ncbi.nlm.nih.gov/pmc/articles/PMC1404074/}.

\bibitem[Estes(1964)]{estes_probability_1964}
William~K. Estes.
\newblock Probability {Learning} and {Sequence} learning.
\newblock In Arthur~W. Melton, editor, \emph{Categories of {Human} {Learning}}, pages 89--128. Academic Press, January 1964.
\newblock ISBN 978-1-4832-3145-7.
\newblock \doi{10.1016/B978-1-4832-3145-7.50010-8}.
\newblock URL \url{https://www.sciencedirect.com/science/article/pii/B9781483231457500108}.

\bibitem[Estes and Straughan(1954)]{estes_analysis_1954}
W.~K. Estes and J.~H. Straughan.
\newblock Analysis of a verbal conditioning situation in terms of statistical learning theory.
\newblock \emph{Journal of Experimental Psychology}, 47\penalty0 (4):\penalty0 225--234, 1954.
\newblock ISSN 0022-1015.
\newblock \doi{10.1037/h0060989}.
\newblock URL \url{https://doi.apa.org/doi/10.1037/h0060989}.

\bibitem[Jones et~al.(2015)Jones, Moore, Shub, and Amitay]{jones_role_2015}
Pete~R. Jones, David~R. Moore, Daniel~E. Shub, and Sygal Amitay.
\newblock The role of response bias in perceptual learning.
\newblock \emph{Journal of Experimental Psychology: Learning, Memory, and Cognition}, 41\penalty0 (5):\penalty0 1456--1470, September 2015.
\newblock ISSN 1939-1285, 0278-7393.
\newblock \doi{10.1037/xlm0000111}.
\newblock URL \url{https://doi.apa.org/doi/10.1037/xlm0000111}.

\bibitem[Gold et~al.(2008)Gold, Law, Connolly, and Bennur]{gold_relative_2008}
Joshua~I. Gold, Chi-Tat Law, Patrick Connolly, and Sharath Bennur.
\newblock The {Relative} {Influences} of {Priors} and {Sensory} {Evidence} on an {Oculomotor} {Decision} {Variable} {During} {Perceptual} {Learning}.
\newblock \emph{Journal of Neurophysiology}, 100\penalty0 (5):\penalty0 2653--2668, November 2008.
\newblock ISSN 0022-3077, 1522-1598.
\newblock \doi{10.1152/jn.90629.2008}.
\newblock URL \url{https://www.physiology.org/doi/10.1152/jn.90629.2008}.

\bibitem[Verplanck et~al.(1952)Verplanck, Collier, and Cotton]{verplanck_nonindependence_1952}
William~S. Verplanck, George~H. Collier, and John~W. Cotton.
\newblock Nonindependence of successive responses in measurements of the visual threshold.
\newblock \emph{Journal of Experimental Psychology}, 44\penalty0 (4):\penalty0 273--282, 1952.
\newblock ISSN 0022-1015.
\newblock \doi{10.1037/h0054948}.
\newblock URL \url{https://doi.apa.org/doi/10.1037/h0054948}.

\bibitem[Hawker(1964)]{hawker_influence_1964}
James~R. Hawker.
\newblock The influence of training procedure and other task variables in paired-associate learning.
\newblock \emph{Journal of Verbal Learning and Verbal Behavior}, 3\penalty0 (1):\penalty0 70--76, February 1964.
\newblock ISSN 0022-5371.
\newblock \doi{10.1016/S0022-5371(64)80060-8}.
\newblock URL \url{https://www.sciencedirect.com/science/article/pii/S0022537164800608}.

\bibitem[Bower(1962)]{bower_association_1962}
Gordon~H. Bower.
\newblock An association model for response and training variables in paired-associate learning.
\newblock \emph{Psychological Review}, 69\penalty0 (1):\penalty0 34--53, January 1962.
\newblock ISSN 1939-1471, 0033-295X.
\newblock \doi{10.1037/h0039023}.
\newblock URL \url{https://doi.apa.org/doi/10.1037/h0039023}.

\bibitem[Feldman(2000)]{feldman_minimization_2000}
Jacob Feldman.
\newblock Minimization of {Boolean} complexity in human concept learning.
\newblock \emph{Nature}, 407\penalty0 (6804):\penalty0 630--633, October 2000.
\newblock ISSN 0028-0836, 1476-4687.
\newblock \doi{10.1038/35036586}.
\newblock URL \url{https://www.nature.com/articles/35036586}.

\bibitem[Goodman et~al.(2008)Goodman, Tenenbaum, Feldman, and Griffiths]{goodman_rational_2008}
Noah~D. Goodman, Joshua~B. Tenenbaum, Jacob Feldman, and Thomas~L. Griffiths.
\newblock A {Rational} {Analysis} of {Rule}-{Based} {Concept} {Learning}.
\newblock \emph{Cognitive Science}, 32\penalty0 (1):\penalty0 108--154, 2008.
\newblock ISSN 1551-6709.
\newblock \doi{10.1080/03640210701802071}.
\newblock URL \url{https://onlinelibrary.wiley.com/doi/abs/10.1080/03640210701802071}.
\newblock \_eprint: https://onlinelibrary.wiley.com/doi/pdf/10.1080/03640210701802071.

\bibitem[Chater(1996)]{chater_reconciling_1996}
Nick Chater.
\newblock Reconciling {Simplicity} and {Likelihood} {Principles} in {Perceptual} {Organization}.
\newblock \emph{Psychological review}, 103:\penalty0 566--81, July 1996.
\newblock \doi{10.1037/0033-295X.103.3.566}.

\bibitem[Lombrozo(2007)]{lombrozo_simplicity_2007}
Tania Lombrozo.
\newblock Simplicity and probability in causal explanation.
\newblock \emph{Cognitive Psychology}, 55\penalty0 (3):\penalty0 232--257, November 2007.
\newblock ISSN 0010-0285.
\newblock \doi{10.1016/j.cogpsych.2006.09.006}.
\newblock URL \url{https://www.sciencedirect.com/science/article/pii/S0010028506000739}.

\bibitem[Feldman(2003)]{feldman_simplicity_2003}
Jacob Feldman.
\newblock The {Simplicity} {Principle} in {Human} {Concept} {Learning}.
\newblock \emph{Current Directions in Psychological Science}, 12\penalty0 (6):\penalty0 227--232, December 2003.
\newblock ISSN 0963-7214.
\newblock \doi{10.1046/j.0963-7214.2003.01267.x}.
\newblock URL \url{https://doi.org/10.1046/j.0963-7214.2003.01267.x}.
\newblock Publisher: SAGE Publications Inc.

\bibitem[Fukumizu(1998)]{fukumizu_effect_1998}
Kenji Fukumizu.
\newblock Effect {Of} {Batch} {Learning} {In} {Multilayer} {Neural} {Networks}.
\newblock June 1998.

\bibitem[Baldi and Hornik(1989)]{baldi_neural_1989}
Pierre Baldi and Kurt Hornik.
\newblock Neural networks and principal component analysis: {Learning} from examples without local minima.
\newblock \emph{Neural Networks}, 2\penalty0 (1):\penalty0 53--58, January 1989.
\newblock ISSN 0893-6080.
\newblock \doi{10.1016/0893-6080(89)90014-2}.
\newblock URL \url{https://www.sciencedirect.com/science/article/pii/0893608089900142}.

\bibitem[Lampinen and Ganguli(2019)]{lampinen_analytic_2019}
Andrew~K. Lampinen and Surya Ganguli.
\newblock An analytic theory of generalization dynamics and transfer learning in deep linear networks, January 2019.
\newblock URL \url{http://arxiv.org/abs/1809.10374}.
\newblock arXiv:1809.10374 [cs, stat].

\bibitem[Liebana~Garcia et~al.(2023)Liebana~Garcia, Laffere, Toschi, Schilling, Podlaski, Fritsche, Zatka-Haas, Li, Bogacz, Saxe, and Lak]{liebana_garcia_striatal_2023}
Samuel Liebana~Garcia, Aeron Laffere, Chiara Toschi, Louisa Schilling, Jacek Podlaski, Matthias Fritsche, Peter Zatka-Haas, Yulong Li, Rafal Bogacz, Andrew Saxe, and Armin Lak.
\newblock Striatal dopamine reflects individual long-term learning trajectories, December 2023.
\newblock URL \url{http://biorxiv.org/lookup/doi/10.1101/2023.12.14.571653}.

\bibitem[Amitay et~al.(2014)Amitay, Zhang, Jones, and Moore]{amitay_perceptual_2014}
Sygal Amitay, Yu-Xuan Zhang, Pete~R. Jones, and David~R. Moore.
\newblock Perceptual learning: {Top} to bottom.
\newblock \emph{Vision Research}, 99:\penalty0 69--77, June 2014.
\newblock ISSN 0042-6989.
\newblock \doi{10.1016/j.visres.2013.11.006}.
\newblock URL \url{https://www.sciencedirect.com/science/article/pii/S0042698913002800}.

\bibitem[Urai et~al.(2019)Urai, de~Gee, Tsetsos, and Donner]{urai_choice_2019}
Anne~E Urai, Jan~Willem de~Gee, Konstantinos Tsetsos, and Tobias~H Donner.
\newblock Choice history biases subsequent evidence accumulation.
\newblock \emph{eLife}, 8:\penalty0 e46331, July 2019.
\newblock ISSN 2050-084X.
\newblock \doi{10.7554/eLife.46331}.
\newblock URL \url{https://doi.org/10.7554/eLife.46331}.
\newblock Publisher: eLife Sciences Publications, Ltd.

\bibitem[Fan et~al.(2024)Fan, Doi, Gold, and Ding]{fan_neural_2024}
Yunshu Fan, Takahiro Doi, Joshua~I. Gold, and Long Ding.
\newblock Neural {Representations} of {Post}-{Decision} {Accuracy} and {Reward} {Expectation} in the {Caudate} {Nucleus} and {Frontal} {Eye} {Field}.
\newblock \emph{The Journal of Neuroscience}, 44\penalty0 (2):\penalty0 e0902232023, January 2024.
\newblock ISSN 0270-6474, 1529-2401.
\newblock \doi{10.1523/JNEUROSCI.0902-23.2023}.
\newblock URL \url{https://www.jneurosci.org/lookup/doi/10.1523/JNEUROSCI.0902-23.2023}.

\bibitem[Sugrue et~al.(2004)Sugrue, Corrado, and Newsome]{sugrue_matching_2004}
Leo~P. Sugrue, Greg~S. Corrado, and William~T. Newsome.
\newblock Matching {Behavior} and the {Representation} of {Value} in the {Parietal} {Cortex}.
\newblock \emph{Science}, 304\penalty0 (5678):\penalty0 1782--1787, June 2004.
\newblock ISSN 0036-8075, 1095-9203.
\newblock \doi{10.1126/science.1094765}.
\newblock URL \url{https://www.science.org/doi/10.1126/science.1094765}.

\bibitem[Dutilh et~al.(2012)Dutilh, van Ravenzwaaij, Nieuwenhuis, van~der Maas, Forstmann, and Wagenmakers]{dutilh_how_2012}
Gilles Dutilh, Don van Ravenzwaaij, Sander Nieuwenhuis, Han L.~J. van~der Maas, Birte~U. Forstmann, and Eric-Jan Wagenmakers.
\newblock How to measure post-error slowing: {A} confound and a simple solution.
\newblock \emph{Journal of Mathematical Psychology}, 56\penalty0 (3):\penalty0 208--216, June 2012.
\newblock ISSN 0022-2496.
\newblock \doi{10.1016/j.jmp.2012.04.001}.
\newblock URL \url{https://www.sciencedirect.com/science/article/pii/S0022249612000454}.

\bibitem[Rabbitt and Rodgers(1977)]{rabbitt_what_1977}
Patrick Rabbitt and Bryan Rodgers.
\newblock What does a {Man} do after he {Makes} an {Error}? {An} {Analysis} of {Response} {Programming}.
\newblock \emph{Quarterly Journal of Experimental Psychology}, 29\penalty0 (4):\penalty0 727--743, November 1977.
\newblock ISSN 0033-555X.
\newblock \doi{10.1080/14640747708400645}.
\newblock URL \url{http://journals.sagepub.com/doi/10.1080/14640747708400645}.

\bibitem[Bordelon et~al.(2020)Bordelon, Canatar, and Pehlevan]{Bordelon2020}
Blake Bordelon, Abdulkadir Canatar, and Cengiz Pehlevan.
\newblock Spectrum dependent learning curves in kernel regression and wide neural networks.
\newblock \emph{ArXiv e-prints}, 2020.
\newblock URL \url{https://arxiv.org/abs/2002.02561}.
\newblock tex.eprint: 2002.02561.

\bibitem[Mei et~al.(2022)Mei, Misiakiewicz, and Montanari]{Mei22Generalizationerrorrandom}
Song Mei, Theodor Misiakiewicz, and Andrea Montanari.
\newblock Generalization error of random feature and kernel methods: hypercontractivity and kernel matrix concentration.
\newblock \emph{Applied and Computational Harmonic Analysis}, 59:\penalty0 3--84, 2022.
\newblock Publisher: Elsevier tex.creationdate: 2022-07-20T21:54:34 tex.modificationdate: 2022-07-20T21:54:42.

\bibitem[Mingard et~al.(2023)Mingard, Rees, Valle-Pérez, and Louis]{mingard_deep_2023}
Chris Mingard, Henry Rees, Guillermo Valle-Pérez, and Ard~A. Louis.
\newblock Do deep neural networks have an inbuilt {Occam}'s razor?, April 2023.
\newblock URL \url{http://arxiv.org/abs/2304.06670}.
\newblock arXiv:2304.06670 [cs, stat].

\bibitem[{Li Deng}(2012)]{li_deng_mnist_2012}
{Li Deng}.
\newblock The {MNIST} {Database} of {Handwritten} {Digit} {Images} for {Machine} {Learning} {Research} [{Best} of the {Web}].
\newblock \emph{IEEE Signal Processing Magazine}, 29\penalty0 (6):\penalty0 141--142, November 2012.
\newblock ISSN 1053-5888.
\newblock \doi{10.1109/MSP.2012.2211477}.
\newblock URL \url{http://ieeexplore.ieee.org/document/6296535/}.

\bibitem[Glorot and Bengio(2010)]{glorot_understanding_2010}
Xavier Glorot and Yoshua Bengio.
\newblock Understanding the difficulty of training deep feedforward neural networks.
\newblock In \emph{Proceedings of the {Thirteenth} {International} {Conference} on {Artificial} {Intelligence} and {Statistics}}, pages 249--256. JMLR Workshop and Conference Proceedings, March 2010.
\newblock URL \url{https://proceedings.mlr.press/v9/glorot10a.html}.
\newblock ISSN: 1938-7228.

\bibitem[Krizhevsky()]{krizhevsky_learning_nodate}
Alex Krizhevsky.
\newblock Learning {Multiple} {Layers} of {Features} from {Tiny} {Images}.

\bibitem[Liu et~al.(2015)Liu, Luo, Wang, and Tang]{liu_deep_2015}
Ziwei Liu, Ping Luo, Xiaogang Wang, and Xiaoou Tang.
\newblock Deep {Learning} {Face} {Attributes} in the {Wild}, September 2015.
\newblock URL \url{http://arxiv.org/abs/1411.7766}.
\newblock arXiv:1411.7766 [cs].

\bibitem[Perron(1907)]{perron_zur_1907}
Oskar Perron.
\newblock Zur {Theorie} der {Matrices}.
\newblock \emph{Mathematische Annalen}, 64\penalty0 (2):\penalty0 248--263, June 1907.
\newblock ISSN 0025-5831, 1432-1807.
\newblock \doi{10.1007/BF01449896}.
\newblock URL \url{http://link.springer.com/10.1007/BF01449896}.

\bibitem[Jacot et~al.(2018)Jacot, Gabriel, and Hongler]{jacot_neural_2018}
Arthur Jacot, Franck Gabriel, and Clément Hongler.
\newblock Neural tangent kernel: {Convergence} and generalization in neural networks.
\newblock \emph{ArXiv e-prints}, June 2018.
\newblock URL \url{https://arxiv.org/abs/1806.07572}.
\newblock tex.eprint: 1806.07572.

\bibitem[Roberts et~al.()Roberts, Yaida, and Hanin]{roberts_principles_nodate}
Daniel~A Roberts, Sho Yaida, and Boris Hanin.
\newblock The {Principles} of {Deep} {Learning} {Theory}.
\newblock page 449.

\bibitem[Hecke(1917)]{Hecke17Uberorthogonalinvariante}
Erich Hecke.
\newblock Über orthogonal-invariante integralgleichungen.
\newblock \emph{Mathematische Annalen}, 78\penalty0 (1):\penalty0 398--404, 1917.
\newblock Publisher: Springer-Verlag tex.creationdate: 2022-07-23T15:10:14 tex.modificationdate: 2022-07-23T15:10:14.

\bibitem[Dutordoir et~al.(2020)Dutordoir, Durrande, and Hensman]{dutordoir_sparse_2020}
Vincent Dutordoir, Nicolas Durrande, and James Hensman.
\newblock Sparse {Gaussian} processes with spherical harmonic features.
\newblock In \emph{International {Conference} on {Machine} {Learning}}, pages 2793--2802. PMLR, 2020.
\newblock ISBN 2640-3498.

\bibitem[Erzan and Tuncer(2020)]{erzan_explicit_2020}
Ayşe Erzan and Aslı Tuncer.
\newblock Explicit construction of the eigenvectors and eigenvalues of the graph {Laplacian} on the {Cayley} tree.
\newblock \emph{Linear Algebra and its Applications}, 586:\penalty0 111--129, February 2020.
\newblock ISSN 0024-3795.
\newblock \doi{10.1016/j.laa.2019.10.023}.
\newblock URL \url{https://www.sciencedirect.com/science/article/pii/S002437951930463X}.

\bibitem[Brouwer and Haemers(2011)]{brouwer_spectra_2011}
Andries~E. Brouwer and Willem~H. Haemers.
\newblock \emph{Spectra of {Graphs}}.
\newblock Springer Science \& Business Media, December 2011.
\newblock ISBN 978-1-4614-1939-6.
\newblock Google-Books-ID: F98THwYgrXYC.

\bibitem[Feldman(2020)]{feldman_does_2020}
Vitaly Feldman.
\newblock Does learning require memorization? a short tale about a long tail.
\newblock In \emph{Proceedings of the 52nd {Annual} {ACM} {SIGACT} {Symposium} on {Theory} of {Computing}}, pages 954--959, Chicago IL USA, June 2020. ACM.
\newblock ISBN 978-1-4503-6979-4.
\newblock \doi{10.1145/3357713.3384290}.
\newblock URL \url{https://dl.acm.org/doi/10.1145/3357713.3384290}.

\bibitem[Van~Horn and Perona(2017)]{van_horn_devil_2017}
Grant Van~Horn and Pietro Perona.
\newblock The {Devil} is in the {Tails}: {Fine}-grained {Classification} in the {Wild}, September 2017.
\newblock URL \url{http://arxiv.org/abs/1709.01450}.
\newblock arXiv:1709.01450 [cs].

\bibitem[{Haibo He} and Garcia(2009)]{haibo_he_learning_2009}
{Haibo He} and E.A. Garcia.
\newblock Learning from {Imbalanced} {Data}.
\newblock \emph{IEEE Transactions on Knowledge and Data Engineering}, 21\penalty0 (9):\penalty0 1263--1284, September 2009.
\newblock ISSN 1041-4347.
\newblock \doi{10.1109/TKDE.2008.239}.
\newblock URL \url{http://ieeexplore.ieee.org/document/5128907/}.

\bibitem[Huang et~al.(2016)Huang, Li, Loy, and Tang]{huang_learning_2016}
Chen Huang, Yining Li, Chen~Change Loy, and Xiaoou Tang.
\newblock Learning {Deep} {Representation} for {Imbalanced} {Classification}.
\newblock In \emph{2016 {IEEE} {Conference} on {Computer} {Vision} and {Pattern} {Recognition} ({CVPR})}, pages 5375--5384, Las Vegas, NV, USA, June 2016. IEEE.
\newblock ISBN 978-1-4673-8851-1.
\newblock \doi{10.1109/CVPR.2016.580}.
\newblock URL \url{https://ieeexplore.ieee.org/document/7780949/}.

\bibitem[Ye et~al.(2021)Ye, Zhan, and Chao]{ye_procrustean_2021}
Han-Jia Ye, De-Chuan Zhan, and Wei-Lun Chao.
\newblock Procrustean {Training} for {Imbalanced} {Deep} {Learning}.
\newblock In \emph{2021 {IEEE}/{CVF} {International} {Conference} on {Computer} {Vision} ({ICCV})}, pages 92--102, Montreal, QC, Canada, October 2021. IEEE.
\newblock ISBN 978-1-66542-812-5.
\newblock \doi{10.1109/ICCV48922.2021.00016}.
\newblock URL \url{https://ieeexplore.ieee.org/document/9710650/}.

\bibitem[Francazi et~al.(2023)Francazi, Baity-Jesi, and Lucchi]{francazi_theoretical_2023}
Emanuele Francazi, Marco Baity-Jesi, and Aurelien Lucchi.
\newblock A {Theoretical} {Analysis} of the {Learning} {Dynamics} under {Class} {Imbalance}, June 2023.
\newblock URL \url{http://arxiv.org/abs/2207.00391}.
\newblock arXiv:2207.00391 [cs, stat].

\end{thebibliography}
\clearpage
\appendix

\section{Appendix / supplemental material}

\subsection{Overview}
\label{app:roadmap}
Our appendix has the following sections:
\begin{itemize}

\item In \cref{app:human-experiment}, we describe the human experiment in more detail and show the results of a replication in a second cohort. We furthermore report statistical tests and describe ethical considerations.

\item In \cref{app:discretization}, we outline how we bring neural network and human responses displayed in \cref{sec:learners} into a common space for direct comparison.

\item In \cref{app:continous-TPR-and-TNR-Rates}, we outline how we compute the TNR rates \(f^{tn}\) used in \cref{sec:early-dyn-ocs} and \cref{sec:learners}.

\item In \cref{app:theory}, we provide additional theoretical derivations and remaining proofs to the statements in the main text.

\item In \cref{app:shallow-OCS}, we show OCS signatures in \textit{shallow} networks with bias terms.

\item In the short \cref{app:TNR-TPR-no-bias}, we show how linear networks without bias terms behave on the task in \cref{sec:learners}.

\item In \cref{app:CNN-experiments}, we describe hyperparameters, datasets, and further training details used for our CNN experiments.

\item In  \cref{app:additional experiments}, we describe the results of additional experiments investigating early emergence of the OCS in non-linear models.

\item In \cref{app:linear-net-classimbalance} we show an additional solvable case of linear networks with bias terms under class imbalance.

\end{itemize}

\subsection{Human learning experiment}
\label{app:human-experiment}

We directly translated the hierarchical task setup used by \citet{saxe2019mathematical} into an experimental paradigm. Our design attempts to stay as close to the original task structure used for neural networks as possible. We designed the task as a mapping from 8 distinct input stimuli represented as planets to a set of 3 associated output stimuli represented as plants (see Fig. \ref{fig:human_task_supplement}, left). 

\begin{figure}[h]
    \includegraphics[width=0.9\linewidth]{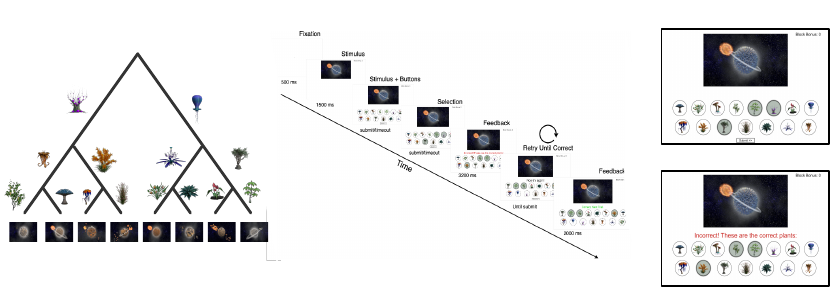}
    \centering
    \caption{Human task design. \textit{Left:} Hierarchical learning task, adapted for human participants. \textit{Centre:} Trial structure as experienced by human participants. \textit{Right:} Example screen during response period (top), Example screen during feedback period (bottom).}
    \label{fig:human_task_supplement}
\end{figure}

In the task, participants had to learn to associate which outputs properties are associated with each input. Unbeknownst to the participants we imposed a hierarchical structure on output targets (Fig. \ref{fig:human_task_supplement}, left). In the structure some output labels are associated with more than one input. As a control for analyses we also included an additional control input-output pair (similarly represented by a planet and a plant; not shown here and excluded from current analysis). We recruited a cohort of 10 subjects that were trained over the course of three days with one daily session. The cohort was recruited as part of a larger neuroimaging experiment but our analysis presented here is exclusive to behavioural results. We further replicated our results in a second cohort of 46 human subjects recruited via the online platform Prolific (prolific.com). Results of the replication of the study can be seen in \cref{fig:human_replication}.

\begin{figure}[h]
    \includegraphics[width=.7\linewidth]{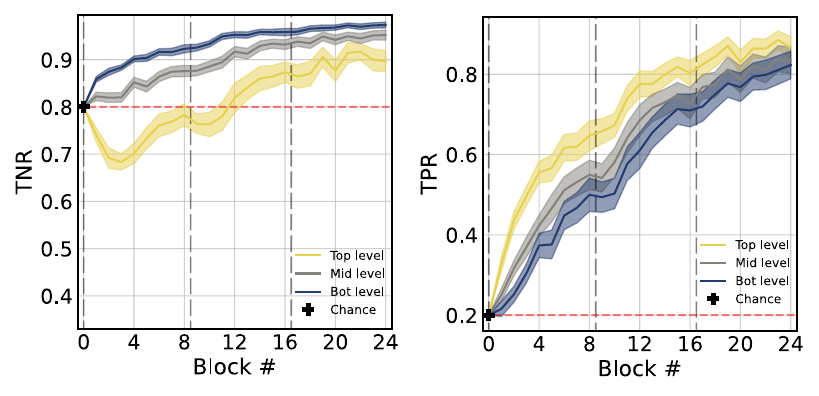}
    \centering
    \caption{The human replication cohort. While learning is slower, the qualitative pattern indicating reliance on the OCS is replicated. \textit{Left}: TNR rate. \textit{Right:} TPR rate.}
    \label{fig:human_replication}
\end{figure}

Each day of at home training consisted of 8 blocks of training with 22 trials each (160 standard trials and 16 control trials) which lasted about one hour. The trial structure during training is shown in \cref{fig:human_task_supplement}, centre.  During training trials, subjects were shown the stimulus on screen and were required to press three buttons, presented below the planet image (\cref{fig:human_task_supplement}, right). The subjects received fully informative feedback on each trial and were forced to repeat the trial in the case of incorrect selection until the correct properties were selected. The location of buttons was shuffled on screen for each trial and for each forced repetition. For each button clicked correctly on their first attempt participants received a bonus point. We displayed a block-wise bonus in the corner of the screen throughout the task. Participants were payed slightly above local minimum wage as a baseline and received a substantial performance dependent bonus (on average about one-third of the baseline pay). We include a screenshot of the initial instructions in \cref{fig:instructions}. Beyond this initial instruction screen participants received more nuanced instructions about clicking of buttons and feedback in the beginning of the task.

\textbf{Statistical tests.} While our focus is on qualitative patterns in human behaviour, we compute statistical tests on the TNR rates for human results seen in the main text (\cref{fig:Human-net-correspondence}). We averaged all blocks in a given day and performed a two-way repeated measures ANOVA to assess the effect of day and hierarchy level on TNR. The two-way repeated measures ANOVA revealed significant main effects of day \(F (2, 18) = 57.22,\: p < .0001, \:\eta^2 = .25 \text{ and level } F (2, 18) = 6.25,\: p = .033, \:\eta^2 = .18\). Beyond this we also found a significant interaction of day and level \(F (4, 36) = 9.795,\: p = .0056 ,\: \eta^2 = .042\). A Mauchly test indicated that the assumption of sphericity had been violated for level \(\chi^2 = .03,\: p < .5 \text{ and the interaction term } \chi^2 = .006,\: p < .5\). Significance values are reported with Greenhouse-Geisser correction. The results confirm that performance between levels are significantly different depending on day and hierarchical level. 

\textbf{Ethical considerations.} Human participants performed a simple, computerised learning task without the collection of personal identifiable information or substantial deception. Human data collection was handled strictly in line with institutional guidelines and under institutional review board approval. We obtained informed consent for each participant before commencing the study. We highlighted that participants could withdraw at any time without penalty or loss of compensation by simply exiting full-screen or informing the experimenter. We provided contact emails in the case of concern or questions. Data was handled in a strictly anonymised format and stored on password secured devices. Participants were payed above minimum wage for their country of origin.

\begin{figure}[h]
    \includegraphics[width=.9\linewidth]{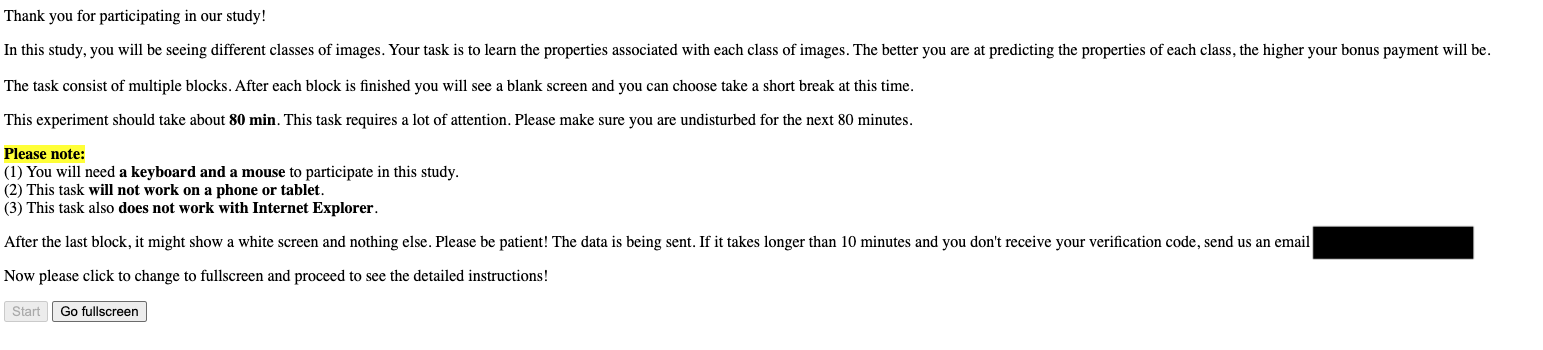}
    \centering
    \caption{Initial instructions received by participants after the collection of informed consent.}
    \label{fig:instructions}
\end{figure}

\subsection{Discretizing network responses for comparison to humans}
\label{app:discretization}

\textbf{Discretization.} In our task, neural networks produced continuous outputs. This is distinct from human learners who were required to give discrete responses. We now describe the discretization that allows us to compare human and network responses. Fundamentally, we conceptualise inference as a noisy process by which responses are sampled from a distribution over output labels. That is, we treat outputs from our linear network as logits. We first feed network outputs \(\mathbf{\hat{y}}_i\) through a softmax function with temperature 0.2 and subsequently sample three responses without replacement. The procedure maps continuous outputs \(\mathbf{\hat{y}}_i\) to binary responses vectors in \(\{0,1\}^{N_{out}}\). 

\textbf{Expected solutions.} Here we describe the derivation of expected solutions used in \cref{fig:Human-net-correspondence}, dashed lines. The derivation of these "expected responses" under the sampling procedure allows to make the reliance of network responses on the exact solutions in \cref{sec:early-dyn-bias} clear.

Consider network outputs \(\hat{\mathbf{y}}_i(t) = \mathbf{W}^2(t)\mathbf{W}^1(t)\mathbf{x}_i\). We transform these outputs through a softmax function \(\sigma_{\beta}: \mathbb{R}^{N_{out}} \rightarrow (0,1)^{N_{out}}\). Let \(S = \{s_1, s_2, s_3\}\) denote the set of three unique response indices sampled from \(\sigma_{\beta}(\hat{\mathbf{y}}_i(t))\) without replacement, where \(s_n \in \{1, 2, \ldots, N_{out}\}\) for \(n = 1, 2, 3\), and all \(s_n\) are hence distinct. The probability distribution \(\sigma_{\beta}(\hat{\mathbf{y}}_i(t))\) is dependent on time \(t\), therefore denote the produced probability of \(S\) as \(P_t(S)\). For each of these sets \(S\) we can compute an associated TNR for each of the \(k\in\{1,2,3\}\) levels in the hierarchy. We denote this random variable as \(X^{k}_{S}\). We can then compute expected solutions to inference behaviour as
\begin{equation}
\mathbb{E}_t[X^{k}_{S}] = \sum_{S \subseteq \{1, 2, ..., m\},\,|S| = 3} P_t(S) X^{k}_{S}
\end{equation}

\subsection{TNR rate/Correct-rejection score}
\label{app:continous-TPR-and-TNR-Rates}
Here we describe the metric used in the bottom panel of \cref{fig:outputs} and in \cref{fig:Human-net-correspondence}. The metrics effectively describes TNR (correct-rejection scores). We use the metric on continuous network responses in \(\mathbb{R}^{N_{out}}\) in \cref{fig:outputs}. We also use the metric on discretised networks responses in \(\{0,1\}^{N_{out}}\) and for human responses in \(\{0,1\}^{N_{out}}\) in \cref{fig:Human-net-correspondence}.

Given responses in \(\mathbf{\hat{y}}\) or in and target vectors \(\mathbf{y} \in \mathbb{R}^{N_{out}}\) the metric computes the alignment between target and response vectors while only focusing on zero entries in \(\mathbf{y}\). Furthermore we compute the metric separately for the \(k \in \{1,2,3\} \) separate levels of the hierarchy where the entries \(s\) and \(e\) denote relevant start and  end indices of level \(k\) in the vectors \(\mathbf{\hat{y}}\) and \(\mathbf{y}\). The metric is then computed as

\begin{equation}
\label{cont_TNR}
f^{tn}_k(\mathbf{\hat{y}}, \mathbf{y}) = \frac{(\mathbf{1}_{N_{out}} -\mathbf{\hat{y}})_{s:e}^T\,(\mathbf{1}_{N_{out}} - \mathbf{y})_{s:e}}{\sum^{e}_{i=s}(\mathbf{1}_{N_{out}} - \mathbf{y})_i},
\end{equation}
where $s:e$ is a "slicing" notation that takes the subvector between indices $s$ and $e$.

If for all desired entries of \(0\) in \(\mathbf{y}\) the vector \(\mathbf{\hat{y}} \) is equal to \(0\) the metric will be at 1. Correspondingly if entries in \(\mathbf{\hat{y}} \) are larger than zero the metric \(f^{tn}_k(\mathbf{\hat{y}}, \mathbf{y})\) will decrease. Thus, the metric measures wrong beliefs about the presence of target labels across the different levels of the hierarchy.

\subsection{Additional theoretical results and proofs}
\label{app:theory}

\subsubsection{Equivalence of bias terms}\label{app:Equivalence-of-bias}
In this section, we give more detail on the method used in in \cref{sec:early-dyn-bias} of how to reformulate a bias term in terms of the network weights and a constant feature in the input.

Consider a network with an explicit input bias term $\mathbf{b}^{1}$,
\[
\hat{\mathbf{y}}=\tilde{\mathbf{W}}^{1}\tilde{\mathbf{x}}+\tilde{\mathbf{b}}^{1}
\]

This is equivalent to introducing a constant component to the vector
$\mathbf{x}$, 
\[
\tilde{\mathbf{x}}\rightarrow \mathbf{x}\coloneqq\left[\begin{array}{c}
1\\
\tilde{\mathbf{x}}
\end{array}\right],
\]

and using the network
\[
\hat{\mathbf{y}}=\mathbf{W}^{1}\mathbf{x},
\]

as we can write
\begin{align*}
\left(\mathbf{W}^\mathbf{{1}}\mathbf{x}\right)_{m} & =\sum_{j=0}^{N_{in}}W_{mj}^{1}x_{j}\\
 & =W_{m0}^{1}1+\sum_{j=1}^{N_{in}}W_{mj}^{1}x_{j}\\
 & =W_{m0}^{1}1+\sum_{j=0}^{N_{in}-1}\tilde{W}_{mj}^{1}\tilde{x}_{j}\\
 & \equiv b_{m}^{1}+\sum_{j=0}^{N_{in}-1}\tilde{W}_{mj}^{1}\tilde{x}_{j}.
\end{align*}

In order to match a given i.i.d. initialization $b_{m}^{1}\sim\mathcal{N}\left(0,\,\sigma_{b}^{2}\right)$
where $\sigma_{b}\neq\sigma_{w}$, the component that needs to be
added to $\tilde{\mathbf{x}}$ to get equivalence needs to be $\sigma_{b}/\sigma_{w}$.

\subsubsection{Learning dynamics for bias terms}\label{app:learning-dynamics-bias-terms}

We here derive analytical expressions for the learning speeds of input
and output bias terms for a two-layer deep linear network discussed in the
main text, 

\[
\hat{\mathbf{y}}=\mathbf{W}^{2}\left(\mathbf{W}^{1}\mathbf{x}+\mathbf{b}^{1}\right)+\mathbf{b}^{2}.
\]

We decompose $\mathbf{W}^{2}=\mathbf{UA}^{(2)}\mathbf{R}^{(2)}$ and $\mathbf{W}^{1}=\mathbf{R}^{(1)}\mathbf{A}^{(1)}\mathbf{V}$
by means of a singular value decomposition (SVD). We here make the assumption of balancedness
$\mathbf{W}^{1}(0)\mathbf{W}^{1T}(0)=\mathbf{W}^{2T}(0)\mathbf{W}^{2}(0)$ \citep{braun_exact_2022} at the beginning
of training, which implies $\mathbf{R}^{(2)}\mathbf{S}^{(2)2}\mathbf{R}^{(2)T}=\mathbf{R}^{(1)}\mathbf{S}^{(1)2}\mathbf{R}^{(1)T}$.
For clarity, we further assume the simplification
\[
\mathbf{R}^{(2)T}=\mathbf{R}^{(1)}\eqqcolon \mathbf{R},\:\mathbf{A}^{(2)}=\mathbf{A}^{(1)}\eqqcolon\sqrt{\mathbf{A}}.
\]
We here just state these relations without further comment to complement the respective derivation for the weights in \citep{saxe_exact_2014}. This decomposition then allows to rewrite the gradients. 

\paragraph*{Input bias term}

\begin{align*}
\tau\dt\mathbf{b}^{1} & =\nabla_{\mathbf{b}^{1}}\mathcal{L}\\
 & =\left(\mathbf{y}-\hat{\mathbf{y}}\right)^{T}\mathbf{W}^{2}\\
 & =\left(\mathbf{y}-\left(\mathbf{W}^{2}\left(\mathbf{W}^{1}\mathbf{x}+\mathbf{b}^{1}\right)+\mathbf{b}^{2}\right)\right)^{T}\mathbf{W}^{2}\\
\mathbb{E}_{\mathbf{x}} & \rightarrow\left(\bar{\mathbf{y}}-\mathbf{W}^{2}\left(\mathbf{W}^{1}\bar{\mathbf{x}}+\mathbf{b}^{1}\right)-\mathbf{b}^{2}\right)^{T}\mathbf{W}^{2}\\
 & =\left(\bar{\mathbf{y}}-\mathbf{UAV}\bar{\mathbf{x}}-\mathbf{U\sqrt{\mathbf{A}}\mathbf{R}b}^{1}-\mathbf{b}^{2}\right)^{T}\mathbf{U}\sqrt{\mathbf{A}}\mathbf{R}^{T}\\
 & =\bar{\mathbf{y}}^{T}\mathbf{U}\sqrt{\mathbf{A}}\mathbf{R}^{T}-\bar{\mathbf{x}}^{T}\mathbf{V}^{T}\mathbf{A}\mathbf{R}^{T}-\mathbf{b}^{1T}\mathbf{R}\mathbf{A}\mathbf{R}-\mathbf{b}^{2T}\mathbf{U}\sqrt{\mathbf{A}}\mathbf{R}^{T}\\
 & =\left(\mathbf{Y}\1\right)^{T}\mathbf{U}\sqrt{\mathbf{A}}\mathbf{R}^{T}-\left(\mathbf{X}\1\right)^{T}\mathbf{V}^{T}\mathbf{A}\mathbf{R}-\mathbf{b}^{1T}\mathbf{R}\mathbf{A}\mathbf{R}^{T}-\mathbf{b}^{2T}\mathbf{U}\sqrt{\mathbf{A}}\mathbf{R}^{T}.
\end{align*}

Here, we denoted the expectation over the data samples as $\mathbb{E}_{\mathbf{x}}$. Projecting from the right with $\mathbf{R}_{\alpha}\in\mathbb{R}^{N_{\text{hidden}}}$
gives

\begin{equation}
\tau\dt\left(\mathbf{b}^{1T}\mathbf{R}_{\alpha}\right)=\bar{\mathbf{y}}^{T}\mathbf{U}_{\alpha}\sqrt{a_{\alpha}}-\bar{\mathbf{x}}^{T}\mathbf{V}_{\alpha}^{T}a_{\alpha}-\mathbf{b}^{1T}\mathbf{R}_{\alpha}a_{\alpha}-\mathbf{b}^{2T}\mathbf{U}_{\alpha}\sqrt{a_{\alpha}}.\label{eq:input-bias}
\end{equation}

\paragraph*{Output bias term}

\begin{align}
\tau\dt\mathbf{b}^{2} & =\left(\mathbf{y}-\left(\mathbf{W}^{2}\left(\mathbf{W}^{1}\mathbf{x}+\mathbf{b}^{1}\right)+\mathbf{b}^{2}\right)\right)\nonumber \\
\mathbb{E}_{\mathbf{x}} & \rightarrow\bar{\mathbf{y}}-\mathbf{W}^{2}\left(\mathbf{W}^{1}\bar{\mathbf{x}}+\mathbf{b}^{1}\right)-\mathbf{b}^{2}\nonumber \\
 & =\bar{\mathbf{y}}-\mathbf{UAV}\bar{\mathbf{x}}-\mathbf{U}\sqrt{\mathbf{A}}\mathbf{R^{T}b}^{1}-\mathbf{b}^{2}\nonumber \\
 & =\mathbf{Y}\1-\mathbf{UAVX}\1-\mathbf{UR^{T}b}^{1}-\mathbf{b}^{2}.\label{eq:output_bias}
\end{align}

Notably, the derivative in \cref{eq:input-bias} is proportional to
the singular vectors of the weights $a_{\alpha}$, so that its growth
is attenuated, analogous to the sigmoidal growth in deep linear networks \citep{saxe_exact_2014}. In contrast, the learning signal $\dt \mathbf{b}^{2}$ in \cref{eq:output_bias}
is not affected by the initialization of the weights and is hence
$\mathcal{O}(1)$ already at the beginning of learning, reminiscent of shallow networks. 

\subsubsection{Neural tangent kernel}\label{app:NTK}

In this section, we review the neural tangent kernel (NTK). This
object is useful because it directly describes the learning dynamics
in output space $\hat{y}$ \citep{jacot_neural_2018,roberts_principles_nodate} as we briefly
demonstrate here. We then calculate the NTK for our specific architecture
to yield \cref{eq:NTK} in the main text. The following makes use of Einstein summation
convention. 

For a vector-valued model $\hat{\mathbf{y}}(\mathbf{x})\in\mathbb{R}^{N_{out}}$ parametrized
by a parameter vector $\theta$, the evaluation on sample $\mathbf{x}_{i}$
from training data at $\mathbf{x}_{i'}$ evolves as
\begin{align}
\tau\dt y_{m}(\mathbf{x}_{i}) & =\sum_{k}\frac{dy_{m}(\mathbf{x}_{i})}{d\theta^{k}}\frac{d\theta^{k}}{dt}\label{eq:def_NTK}\\
 & =-\eta\sum_{k}\frac{dy_{m}(\mathbf{x}_{i})}{d\theta^{k}}\frac{d\mathcal{L}}{d\theta^{k}}\\
 & =-\eta\left[\sum_{k}\frac{dy_{m}(\mathbf{x}_{i})}{d\theta^{k}}\frac{dy_{m'}(\mathbf{x}_{i'})}{d\theta^{k}}\right]\frac{d\mathcal{L}}{dy_{m'}}(\mathbf{x}_{i'})\\
 & \eqqcolon-\eta\,\NTK_{mm'}(\mathbf{x}_{i},\,\mathbf{x}_{i'})\left(y_{m'}(\mathbf{x}_{i'})-\hat{y}_{m'}(\mathbf{x}_{i'})\right),
\end{align}

where we used the chain rule and that the parameters update according
to gradient descent with learning rate $\eta$, $\frac{d\theta^{k}}{dt}=-\eta\frac{d\mathcal{L}}{d\theta^{k}}$. The last line has defined the NTK. 
We set $\eta=1$ in the main text for simplicity, as it does not change
trajectory and thereby convergence in the case of gradient flow. In
addition, we evaluated $\frac{d\mathcal{L}}{dy_{m'}}(\mathbf{x}_{i'})$ for
the case of MSE loss $\mathcal{L}(\mathbf{x}_{i'})=\nicefrac{1}{2}\sum_{m'}\left(y_{m'}(\mathbf{x}_{i'})-\hat{{y}}_{m'}(\mathbf{x}_{i'})\right)^{2}$.
The last line of \cref{eq:def_NTK} reveals that the NTK acts as an
effective learning rate, as noted by \citet{roberts_principles_nodate}.

We here consider a two-layer linear architecture $\hat{Y}_{m}^{i}(\mathbf{X})=W_{mk}^{2}\left(W_{kj}^{1}X_{j}^{i}+b_{k}^{1}\right)+b_{m}^{2}$
where we adopt Einstein summation convention over repeated indices.
The parameters are $\theta^{k}\in\left\{ \mathbf{W}^{2},\mathbf{W}^{2},\mathbf{b}^{1},\mathbf{b}^{2}\right\} $.
Herein, $m$ indexes output features and $i$ indexes data samples.
The non-zero gradients are 
\begin{align*}
\frac{d\hat{Y}_{m}^{i}}{dW_{mk}^{2}} & =W_{kj}^{1}X_{j}^{i}+b_{k}^{1}\\
\frac{d\hat{Y}_{m}^{i}}{db_{m}^{2}} & =1_{m}\\
\frac{d\hat{Y}_{m}^{i}}{dW_{kj}^{1}} & =W_{mk}^{2}X_{j}^{i}\\
\frac{d\hat{Y}_{m}^{i}}{db_{k}^{1}} & =W_{mk}^{2}1_{k}.
\end{align*}

Inserting this into \cref{eq:def_NTK}, we get
\begin{align*}
\mathsf{NTK}_{m_{1}m_{2}}(X_{\;j}^{i_{1}},X_{j}^{\,i_{2}}) & =I_{m_{1}m_{2}}\:\left(X_{\;j'}^{i_{1}}W_{j'k}^{1}W_{kj''}^{1}X_{j''}^{\,i_{2}}+b_{k}^{1}b_{k}^{1}\right)\\
 & +1_{m_{1}}1_{m_{2}}\\
 & +W_{m_{1}k}^{2}W_{km_{2}}^{2T}\:X_{\;j}^{i_{1}}X_{j}^{\,i_{2}}\\
 & +W_{m_{1}k}^{2}1_{k}1_{k}W_{km_{2}}^{2}.
\end{align*}
or in matrix notation, collecting similar terms
\begin{align*}
\mathsf{NTK}(\mathbf{X},\mathbf{X}) & =\mathbf{I}_{N_{out}}\otimes \mathbf{X}^{T}\mathbf{W}^{1T}\mathbf{W}^{1}\mathbf{X}+\mathbf{b}^{1T}\mathbf{b}^{1}\\
 & +\mathbf{1}\mathbf{1}^{T}\,\otimes\,\underbrace{\mathbf{1}\mathbf{1}^{T}}_{\leftrightarrow \mathbf{b}^{2}}\\
 & +\mathbf{W}^{2}\mathbf{W}^{2T}\,\otimes\,\Bigl(\mathbf{X}^{T}\mathbf{X}+\underbrace{\mathbf{11}^{T}}_{\leftrightarrow b^{1}}\Bigr)\\
 & \in\mathbb{R}^{N_{out}\times N_{out}}\,\otimes\,\mathbb{R}^{N\times N},
\end{align*}

where the left hand side operator in the tensor product $\otimes$
is acting in output space $m_{1}m_{2}$, whereas the right hand side
operator acts in pattern space $i_{1}i_{2}$. The notation $\leftrightarrow \mathbf{b}$
indicates that a term is due to the bias term. To illustrate this, the NTK acts on the set of labels $\mathbf{Y}\in\mathbb{R}^{N_{in}\times N}$ as follows:
\begin{equation}
    \left(\NTK(\mathbf{X},\mathbf{X})\,\mathbf{Y}\right)_{i}^{m}=\sum_{m'}^{N_{out}}\sum_{i'}^{N}\NTK(X_{i},X_{i'})^{mm'}Y_{i'}^{m'}.
\end{equation}
For simplicity, we approximate $\mathbf{W}^{2}(0)\mathbf{W}^{2T}(0)=\sigma_{\mathbf{W}}^{2}I_{N_{out}}$
and $\mathbf{W}^{1T}(0)\mathbf{W}^{1}(0)=\sigma_{\mathbf{W}}^{2}I_{N_{in}}$, which approximately holds for
initialization
\[
\mathbf{W}^{1}(0)\sim\mathcal{N}\left(0,\,\sigma_{\mathbf{W}}^{2}/N_{hid}\right),\,\mathbf{W}^{2}(0)\sim\mathcal{N}\left(0,\,\sigma_{\mathbf{W}}^{2}/N_{hid}\right),\,\mathbf{b}^{1}=0,\,\mathbf{b}^{2}=0 
\]
where $N_{hid}$ is the size of the hidden layer and both $N_{in}$ and $N_{hid}$ are large. This leaves

\begin{align*}
\mathsf{NTK}(\mathbf{X},\mathbf{X}) & =\sigma_{\mathbf{W}}^{2}\mathbf{I}_{N_{out}}\,\otimes\,\Bigl(2\mathbf{X}^{T}\mathbf{X}+\underbrace{\mathbf{11}^{T}}_{\leftrightarrow b^{1}}\Bigr)\;+\;\mathbf{11}^{T}\,\otimes\,\underbrace{\mathbf{11}^{T}}_{\leftrightarrow \mathbf{b}^{2}}.
\end{align*}

\subsection*{Proofs}

\subsubsection{Feasibility of closed-form solution}

\commutemain*

\begin{proof} \label{app:commute}

We would like to know when the right singular vectors $\mathbf{V}$ (denote as $\mathbf{V}^{yx}$ here for clarity)
of $\mathbf{\Sigma}^{yx}=\mathbf{U}^{yx}\mathbf{S}^{yx}\mathbf{V}^{yx}$
match these of $\mathbf{\Sigma}^{x}=\mathbf{U}^{x}\mathbf{S}^{x}\mathbf{V}^{x}$.
First, to reduce the problem to $\mathbf{V}^{yx}$, note
that $\mathbf{\Sigma}^{yxT}\mathbf{\Sigma}^{yx}=\mathbf{V}^{yx}\mathbf{S}^{yx2}\mathbf{V}^{yx}$,
so that what remains to show is $\left[\mathbf{\Sigma}^{yxT}\mathbf{\Sigma}^{yx},\,\mathbf{\Sigma}^{xx}\right]=0$,
where $\left[\mathbf{A},\,\mathbf{B}\right]\coloneqq\mathbf{AB}-\mathbf{BA}$
denotes the commutator between two matrices $\mathbf{A}$ and $\mathbf{B}$.
We compute the two terms as 
\begin{align*}
\mathbf{\Sigma}^{yxT}\mathbf{\Sigma}^{yx}\mathbf{\Sigma}^{xx} & =\mathbf{X}\mathbf{Y}^{T}\mathbf{Y}\mathbf{X}^{T}\mathbf{X}\mathbf{X}^{T}\\
\mathbf{\Sigma}^{xxT}\mathbf{\Sigma}^{yxT}\mathbf{\Sigma}^{yx} & =\mathbf{XX}^{T}\mathbf{XY}^{T}\mathbf{YX}^{T}
\end{align*}

The commutator vanishes if these terms match, which happens for the
simpler equality 
\[
\mathbf{Y}^{T}\mathbf{YX}^{T}\mathbf{X}=\mathbf{X}^{T}\mathbf{XY}^{T}\mathbf{Y},
\]

or $\left[\mathbf{Y}^{T}\mathbf{Y},\,\mathbf{X}^{T}\mathbf{X}\right]=0$. The converse follows only if
the tranformation $\mathbf{X}\ldots \mathbf{X}^{T}$ in the former equation is invertible,
which is the case if a left inverse $\mathbf{X}^{-1}\mathbf{X}=\mathbf{I}_{N_{in}}$ exists.

\end{proof}

\subsubsection{OCS and shared properties correspond to each other}
We here link the OCS and shared properties stand in close relation, as the eigenvector $\1$ represents properties that are shared across all data samples. 
\ocsonemain*

\begin{proof} \label{app:ocsone} 
\[
\mathbf{XX}^{T}\bar{\mathbf{x}}=\left(\mathbf{XX}^{T}\right)\frac{1}{N}\mathbf{X}\1=\frac{1}{N}\mathbf{X}\left(\mathbf{X}^{T}\mathbf{X}\right)\1=\frac{1}{N}\mathbf{X}\lambda\,\1=\lambda\frac{1}{N}\mathbf{X}\1=\lambda\bar{\mathbf{x}}.
\]
\end{proof} 

\subsubsection{Constant data mode $\1$ is related to symmetry in the data}

This section gives proof sketches based on symmetry in the dataset that
are sufficient to make $\1$ an eigenvector to $\mathbf{X}^{T}\mathbf{X}$ and $\mathbf{Y}^{T}\mathbf{Y}$,
and in particular hold for the dataset that we are considering. We anticipate that it is possible to formulate these statements in a more universal way by fully leveraging the cited literature. 

The assumptions on symmetry should intuitively at least hold in an approximate manner for
many datasets, we expect that they indeed are the reason why we observe
a prevalence of $\1$, although they are not a necessary condition.

\paragraph*{Continuously supported data $\mathbf{x}\in\mathbb{R}^{N_{in}}$}
\label{app:one_in_XX_YY_general}
\begin{restatable}[Continuous symmetry induces $\1$]{proposition}{onecontinuous}
If the pairwise correlations $\mathbf{y}_{i}^{T}\mathbf{y}_{i'}$
in a dataset are rotationally symmetric, its similarity matrix $\mathbf{Y}^{T}\mathbf{Y}$
has eigenvector $\1$. Note that this is a weaker assumption than
the data itself being symmetric.

\end{restatable}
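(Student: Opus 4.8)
The plan is to convert the informal hypothesis of ``rotationally symmetric pairwise correlations'' into a precise invariance of the Gram matrix under a \emph{transitive} symmetry group, after which the eigenvector $\1$ drops out with essentially no computation. Concretely, I would posit a group $G$ of orthogonal transformations of output space $\mathbb{R}^{N_{out}}$ whose action merely \emph{permutes} the sample set $\{\mathbf{y}_{i}\}_{i=1}^{N}$, i.e.\ $g\mathbf{y}_{i}=\mathbf{y}_{\pi_{g}(i)}$ for an induced permutation $\pi_{g}$ of $\{1,\dots,N\}$, and assume this induced action is transitive. This is the sense in which the \emph{correlations}, but not necessarily the raw configuration, are symmetric: $g$ need only reshuffle the $\mathbf{y}_{i}$, and orthogonality of $g$ then forces $(\bY^{T}\bY)_{\pi_{g}(i)\pi_{g}(i')}=(g\mathbf{y}_{i})^{T}(g\mathbf{y}_{i'})=\mathbf{y}_{i}^{T}\mathbf{y}_{i'}=(\bY^{T}\bY)_{ii'}$ for every $g\in G$. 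The analogous hypothesis on the inputs (it holds trivially for one-hot data, whose Gram matrix is the identity) gives the matching statement for $\bX^{T}\bX$.

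Next I would record two consequences. Writing $M\coloneqq\bY^{T}\bY$ and $P_{g}$ for the permutation matrix of $\pi_{g}$, the invariance above reads $P_{g}MP_{g}^{T}=M$, so $M$ commutes with every $P_{g}$; averaging, the operator $\Pi\coloneqq\frac{1}{|G|}\sum_{g\in G}P_{g}$ (a Haar average when $G$ is continuous and compact, with permutation matrices replaced by the unitary shift operators of the kernel in the continuously-supported case) still commutes with $M$, and transitivity forces $\Pi=\frac{1}{N}\1\1^{T}$, the uniform-averaging projector. Hence $M$ commutes with $\1\1^{T}$. The conclusion is then a one-line linear-algebra fact: from $M\1\1^{T}=\1\1^{T}M$ and symmetry of $M$, comparing columns of both sides gives $M\1=(M\1)_{i}\1$ for each $i$, so $M\1$ is a scalar multiple of $\1$, with the scalar forced to be $\lambda=\frac{1}{N}\1^{T}M\1=N\|\bar{\mathbf{y}}\|^{2}$. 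Equivalently and more directly: $\bar{\mathbf{y}}=\frac{1}{N}\bY\1$ is $G$-fixed since $g$ only permutes the summands, so for any $i,j$ one picks $g$ with $\pi_{g}(i)=j$ and computes $(M\1)_{j}=N\mathbf{y}_{j}^{T}\bar{\mathbf{y}}=N(g\mathbf{y}_{i})^{T}\bar{\mathbf{y}}=N\mathbf{y}_{i}^{T}(g^{T}\bar{\mathbf{y}})=N\mathbf{y}_{i}^{T}\bar{\mathbf{y}}=(M\1)_{i}$, so $M\1$ is constant. This $\lambda$ is exactly the eigenvalue of $\bY\bY^{T}$ attached to $\bar{\mathbf{y}}$ in \cref{thm:one_and_avg}, which closes the loop with the OCS picture.

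I expect the only real difficulty to be choosing the hypothesis rather than carrying out the argument. Too strong a reading --- full rotational invariance of the \emph{distribution} generating the data --- makes $\bar{\mathbf{y}}=0$ and renders the statement vacuous, while a symmetry whose orbit does not exhaust the sample set genuinely fails to make $\1$ an eigenvector, so transitivity is carrying the weight. The residual task for the paper's running example is to exhibit such a symmetry for the balanced hierarchical tree of \cref{fig:Task-and-setting}B: its automorphism group permutes the label coordinates (hence acts orthogonally on the $\mathbf{y}_{i}$) and acts transitively on the leaves, so the argument applies verbatim and explains why $\1$ is an eigenvector of $\bY^{T}\bY$ there. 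We further expect the same computation to degrade continuously --- $\1$ becoming an approximate eigenvector --- when the symmetry is only approximate, which is the relevant regime for the natural-image datasets discussed later.
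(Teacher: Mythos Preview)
Your argument is correct and shares the paper's core mechanism --- establish that $M=\bY^{T}\bY$ commutes with a symmetry representation, then use that $\1$ is the unique common invariant vector to conclude $M\1\propto\1$. The difference lies in where the symmetry is placed. The paper formalizes ``rotationally symmetric correlations'' via a group $G=\mathrm{SO}(N_{in})$ acting on the \emph{input} sphere, so that the output correlations become a dot-product kernel $k(\mathbf{x}_i^T\mathbf{x}_{i'})$; the commutation $[\bY^{T}\bY,\mathbf{R}_l]=0$ is then read off from the rotation-invariance of $k$, and the paper separately treats the discrete (graph) case in a second proposition. You instead let $G$ act orthogonally on \emph{output} space and merely permute the samples, which packages both the continuous and discrete cases into a single argument via permutation matrices and the Haar-averaged projector $\Pi=\frac{1}{N}\1\1^{T}$. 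Your version is more elementary and makes explicit two points the paper leaves implicit: that transitivity of the induced action on samples is what does the work, and that the eigenvalue is $\lambda=N\|\bar{\mathbf{y}}\|^{2}$ (so full rotational invariance of the data itself would make the statement vacuous). The paper's framing, in turn, buys a direct connection to the spherical-harmonics literature it cites, which gives information about the full spectrum rather than just the constant mode.
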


\begin{proof}

We assume that $\mathbf{Y},\mathbf{X}$ have been sampled from a ground truth data distribution
$p(\mathbf{y},\mathbf{x})$. If $p(\mathbf{y})$ is rotationally symmetric and
$\mathbf{X}$ is comprised of samples $\mathbf{x}$ that are uniformly distributed
on the hypersphere, we can introduce the kernel function $\mathbf{y}_{\mathbf{x}_{i}}^{T}\mathbf{y}_{\mathbf{x}_{i'}}=k(\mathbf{x}_{i},\,\mathbf{x}{}_{i'})=k(\mathbf{R}_{l}\mathbf{x}_{i},\,\mathbf{R}_{l}\mathbf{x}_{i'})=k(\mathbf{x}_{i}^{T}\mathbf{x}_{i'})$
for any $R_{l}$ that is a representation of the group of rotations
$G=\text{SO}(N_{in})$ that faithfully acts on the ``subsampled''
hypersphere $\mathbf{X}$ comprised of vectors $\mathbf{x}\in\mathbb{R}^{N_{in}}$.
It therefore only depends on the pairwise input similarity (hence
sometimes called dot-product kernel). If follows that for all vectors
$\mathbf{v}(\mathbf{X})\in\mathbb{R}^{N}$ that are evaluations of the functions
of the sample points $\mathbf{X}$

\[
\mathbf{Y}^{T}\mathbf{Y}\,\mathbf{v}=k(\mathbf{X}^{T}\mathbf{X})\,\mathbf{v}=k(\left(\mathbf{R}_{l}\mathbf{X}\right)^{T}\left(\mathbf{R}_{l}\mathbf{X}\right))\,\mathbf{v}=\mathbf{R}_{k}^{T}k(\mathbf{X}^{T}\mathbf{X})\mathbf{R}_{l}\,\mathbf{v}\:\Leftrightarrow\:\left[\mathbf{Y}^{T}\mathbf{Y},\,\mathbf{R}_{l}\right]=0,
\]

where $\left[\mathbf{A},\mathbf{B}\right]\eqqcolon\mathbf{AB}-\mathbf{BA}$ is the commutator between two matrices.

It follows that we must have for all rotations $\mathbf{R}_{l}$

\[
\mathbf{R}_{l}\left(\mathbf{Y}^{T}\mathbf{Y}\,\1\right)=\mathbf{Y}^{T}\mathbf{YR}_{l}\,\1=\mathbf{Y}^{T}\mathbf{Y}\,\lambda_{\mathbf{R}_{l}}\1=\lambda_{\mathbf{R}_{l}}\mathbf{Y}^{T}\mathbf{Y}\,\1
\]

with eigenvalue $\lambda_{\mathbf{R}_{l}}=1$.

meaning that $\mathbf{Y}^{T}\mathbf{Y}\,\1$ is an eigenvector to \emph{all} $\mathbf{R}_{l}$.
This can only be the case if $\mathbf{Y}^{T}\mathbf{Y}\,\1\propto\1$, as this is the
only vector of values on the sphere that is invariant under any rotations. 

\end{proof}

We point out that it can be shown more generally with tools from functional
analysis that the full spectrum of this kernel operator $k$ are the
spherical harmonics if the data measure $p(\mathbf{x})$ is spherically
symmetric \citep{Hecke17Uberorthogonalinvariante}, see \citep{dutordoir_sparse_2020} for a modern
presentation with tools from calculus. As the first harmonic $\mathcal{Y}_{l=0,m=0}(\mathbf{x})$
is constant, it follows that also the constant function $\mathbf{1}(\mathbf{x})\equiv1$
is an eigenfunction when drawing a finite set of samples from this
kernel.

\paragraph{Data on a graph $\mathbf{x}\in\mathbb{R}^{N_{in}}$}

We here prove that the former statement holds for the hierarchical
dataset that is discussed in the main text, i.e. that $\1$ is an
eigenvector to $\mathbf{Y}^{T}\mathbf{Y}$.

First, note that it is easy to convince oneself of this by writing down the
matrices explicitly: Then, as the rows are just permutations of one
another, $\1$ is immediately identified as an eigenvector, because
$\sum_{i'}^{N}\mathbf{Y}_{i}^{T}\mathbf{Y}_{i'}{1}_{i'}=\mathbf{Y}_{i}^{T}\bigl(\sum_{i'}\mathbf{Y}_{i'}\bigr)$
will then not depend on $i$ and hence be proportional to $\1$. 

To connect with the former symmetry-based argument \cref{app:one_in_XX_YY_general},
we here however give a proof that is based on the symmetry in the data: 
\label{app:hierarchical_one} 
\begin{restatable}[Discrete symmetry induces $\1$]{proposition}{hierarchical_one}

Consider a connected Cayley tree graph with adjacency matrix $\mathbf{A}$
and nodes $\mathbf{x}_{i}$. Furthermore, let $\mathbf{R}_{l}\in G$ be an element
of a faithful representation of the symmetry group $G$ that acts
on the graph nodes $\mathbf{v}$, i.e. that leaves its adjacency matrix invariant,  $\left[\mathbf{R}_{l},\mathbf{A}\right]=0\:\forall\,\mathbf{R}_{l}$.

If $\mathbf{Y}$ are labels associated with the leaf nodes $\mathbf{X}$ (the outermost
generation of the graph, see \citep{erzan_explicit_2020}) and there exists a
similarity function $k$ such that $\mathbf{y}_{\mathbf{x}_{i}}^{T}\mathbf{y}_{\mathbf{x}_{i'}}=\mathbf{y}_{\mathbf{R}_{l}\mathbf{x}_{i}}^{T}\mathbf{y}_{\mathbf{R}_{l}\mathbf{x}_{i'}}=k(d(\mathbf{x}_{i},\mathbf{x}_{i'}))$
$\forall\,\mathbf{R}_{l}$ where $d$ is the geodesic distance on the graph,
$\1$ will be an eigenvector of $\mathbf{Y}^{T}\mathbf{Y}$. 

\end{restatable}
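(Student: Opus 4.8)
The plan is to mirror the argument used for the continuous case (the proposition ``Continuous symmetry induces $\mathbf{1}$'' above), with the rotation group $\mathrm{SO}(N_{in})$ replaced by the graph automorphism group $G$ and the uniform measure on the sphere replaced by the transitive action of $G$ on the leaf set. First I would set $\mathbf{M}\coloneqq\mathbf{Y}^{T}\mathbf{Y}$ and record that by hypothesis $M_{ii'}=\mathbf{y}_{\mathbf{x}_i}^{T}\mathbf{y}_{\mathbf{x}_{i'}}=k(d(\mathbf{x}_i,\mathbf{x}_{i'}))$, so the Gram matrix is a function of the graph distance alone.

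Next, each automorphism $\mathbf{R}_l\in G$ restricted to the leaf nodes $\mathbf{X}$ is a permutation of the $N$ leaves (automorphisms preserve node degree, hence map leaves to leaves); write $\mathbf{P}_l$ for the corresponding $N\times N$ permutation matrix. Because $[\mathbf{R}_l,\mathbf{A}]=0$, the geodesic distance is $G$-invariant, $d(\mathbf{R}_l\mathbf{x}_i,\mathbf{R}_l\mathbf{x}_{i'})=d(\mathbf{x}_i,\mathbf{x}_{i'})$, which together with the displayed hypothesis gives $\mathbf{P}_l^{T}\mathbf{M}\mathbf{P}_l=\mathbf{M}$, i.e. $[\mathbf{M},\mathbf{P}_l]=0$ for every $l$ (permutation matrices are orthogonal, so $\mathbf{P}_l^{T}=\mathbf{P}_l^{-1}$). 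I would then use that every permutation matrix fixes the all-ones vector, $\mathbf{P}_l\mathbf{1}=\mathbf{1}$, so that
\[
\mathbf{P}_l(\mathbf{M}\mathbf{1})=\mathbf{M}\mathbf{P}_l\mathbf{1}=\mathbf{M}\mathbf{1}\qquad\text{for all }l,
\]
meaning $\mathbf{M}\mathbf{1}$ lies in the subspace of vectors fixed by the entire leaf action of $G$.

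Finally, I would invoke transitivity of this action: the automorphism group of a homogeneous Cayley tree acts transitively on its leaves — one may permute the sub-trees hanging off each internal node to carry any leaf to any other — so the only vectors invariant under all $\mathbf{P}_l$ are the constant ones, $\mathrm{span}(\mathbf{1})$. Hence $\mathbf{M}\mathbf{1}=\lambda\mathbf{1}$ for some scalar $\lambda$, which is the claim. Equivalently, transitivity can be used directly to say that $\sum_{i'}k(d(\mathbf{x}_i,\mathbf{x}_{i'}))$ does not depend on $i$, recovering the ``rows are permutations of one another'' observation that precedes the proposition.

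The step I expect to be the main obstacle is the last one: pinning down exactly which symmetry hypothesis makes $\mathbf{1}$ the \emph{unique} (up to scale) $G$-invariant vector on the leaves, i.e. that $G$ acts transitively on the leaf set. This is immediate for the standard homogeneous Cayley tree, but the proposition is stated somewhat loosely — it only posits \emph{some} faithful representation with $[\mathbf{R}_l,\mathbf{A}]=0$ — so care is needed to make transitivity (equivalently, that the leaves form a single orbit) the operative assumption, and to justify cleanly the passage from the action of $G$ on all graph nodes to its induced action on leaves. Everything else (distance-invariance, the commutator identity, $\mathbf{P}_l\mathbf{1}=\mathbf{1}$) is routine.
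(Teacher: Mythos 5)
Your proposal is correct and follows essentially the same route as the paper's own proof: both derive $[\mathbf{Y}^{T}\mathbf{Y},\mathbf{R}_l]=0$ from the distance-invariance hypothesis, use that the (permutation) action fixes $\1$ to conclude $\mathbf{Y}^{T}\mathbf{Y}\,\1$ lies in the $G$-fixed subspace, and then identify that subspace with $\mathrm{span}(\1)$. Your version is in fact slightly more careful — you make explicit that the operative assumption in the last step is transitivity of the induced action on the leaf set, a point the paper simply asserts ("the only vector invariant under all symmetry operations is the constant vector") without naming it.
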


\begin{proof}

From the symmetry assumption on the labels, we again have for any
vector $\mathbf{v}$ of node loadings $\left[\mathbf{R}_{l},\,\mathbf{Y}^{T}\mathbf{Y}\right]\mathbf{v}=0\:\forall\,\mathbf{R}_{l}\in G$.
From this, we find that 
\[
\mathbf{R}_{l}\left(\mathbf{Y}^{T}\mathbf{Y}\,\1\right)=\mathbf{Y}^{T}\mathbf{Y}\,\mathbf{R}_{l}\1=\mathbf{Y}^{T}\mathbf{Y}\,\lambda_{\mathbf{R}_{l}}\1=\lambda_{\mathbf{R}_{l}}\mathbf{Y}^{T}\mathbf{Y}\,\1\:\forall\,\mathbf{R}_{l}
\]

This shows that $\mathbf{Y}^{T}\mathbf{Y}\,\1$ is an eigenvector of $\mathbf{R}_{l}$ with eigenvalue
$\lambda_{\mathbf{R}_{l}}=1$ for any element of the symmetry group. The only
vector $\mathbf{v}$ that is invariant under \emph{all} symmetry operations
of the graph is the constant vector $\1$.

\end{proof}

We briefly point out the rich literature on spectral graph theory
(for example \citep{brouwer_spectra_2011,erzan_explicit_2020}) that might allow making
statements about the nature of the eigenvalues and other eigenvectors
as a function of the graph topology. We expect that this is possible
because the literature in the continuous case discussed in the next paragraph bases their arguments
on the Laplacian on the sphere, an operator that can be extended to graphs as well.
We leave these exploration for future work. 

\begin{corollary}

Because $k(\mathbf{X}^{T}\mathbf{X})\coloneqq \mathbf{X}^{T}\mathbf{X}$ defines a particular case of
input-output similarity mapping, $\1$ is also an eigenvector to
$\mathbf{X}^{T}\mathbf{X}$ under the former assumptions of uniform data distribution.

\end{corollary}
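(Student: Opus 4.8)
The plan is to derive this corollary as a direct specialization of the two preceding propositions by taking the similarity map $k$ to be the identity. Concretely, I would let the ``labels'' in the statements of \cref{app:one_in_XX_YY_general} and \cref{app:hierarchical_one} be the inputs themselves, $\mathbf{y}_{\mathbf{x}_i}\coloneqq\mathbf{x}_i$, so that $\mathbf{Y}=\mathbf{X}$ and the induced label similarity matrix is $\mathbf{Y}^{T}\mathbf{Y}=\mathbf{X}^{T}\mathbf{X}$. With this choice the hypothesis of either proposition — that the pairwise label correlations depend only on the input similarity $\mathbf{x}_i^{T}\mathbf{x}_{i'}$ (continuous case), respectively on the geodesic distance $d(\mathbf{x}_i,\mathbf{x}_{i'})$ (graph case) — is satisfied trivially, since $\mathbf{y}_{\mathbf{x}_i}^{T}\mathbf{y}_{\mathbf{x}_{i'}}=\mathbf{x}_i^{T}\mathbf{x}_{i'}$ is exactly that similarity (and, for the one-hot leaf features of the Cayley tree, a function of whether $d=0$ or $d>0$).

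The only point that needs a line of argument is the invariance condition $\mathbf{y}_{\mathbf{x}_i}^{T}\mathbf{y}_{\mathbf{x}_{i'}}=\mathbf{y}_{\mathbf{R}_l\mathbf{x}_i}^{T}\mathbf{y}_{\mathbf{R}_l\mathbf{x}_{i'}}$ used in both proofs: under $\mathbf{y}_{\mathbf{x}}=\mathbf{x}$ this reads $\mathbf{x}_i^{T}\mathbf{x}_{i'}=\mathbf{x}_i^{T}\mathbf{R}_l^{T}\mathbf{R}_l\mathbf{x}_{i'}$, which holds because every representative $\mathbf{R}_l$ of the relevant symmetry group is orthogonal — a rotation in $\mathrm{SO}(N_{in})$ in the continuous setting and a permutation matrix realizing a graph automorphism in the discrete setting. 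Consequently $[\mathbf{X}^{T}\mathbf{X},\mathbf{R}_l]=0$ for all $\mathbf{R}_l$ by the same computation as in the proofs above, and since $\1$ is the unique (up to scaling) vector left invariant by the whole group, $\mathbf{X}^{T}\mathbf{X}\,\1$ must be proportional to $\1$.

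I do not expect a genuine obstacle, as this is a specialization rather than a new argument; the only care required is to carry the inherited hypotheses over correctly. In the continuous case one must still assume the samples $\mathbf{x}$ are (approximately) uniformly distributed on the hypersphere — precisely the ``former assumption of uniform data distribution'' invoked in the corollary — while in the discrete case one needs $\mathbf{X}$ to be the leaf-node features of the symmetric Cayley tree already fixed in \cref{app:hierarchical_one}. Under those hypotheses the conclusion follows verbatim from the cited propositions.
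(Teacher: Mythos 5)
Your proposal is correct and matches the paper's (implicit) argument: the corollary is stated precisely as the specialization $k=\mathrm{id}$, i.e.\ $\mathbf{y}_{\mathbf{x}_i}\coloneqq\mathbf{x}_i$ so that $\mathbf{Y}^{T}\mathbf{Y}=\mathbf{X}^{T}\mathbf{X}$ inherits the commutation with every $\mathbf{R}_l$ and hence has $\1$ as an eigenvector. Your added observation that orthogonality of the group representatives makes the invariance hypothesis automatic is exactly the right (and only) detail needed to carry the specialization through.
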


\subsubsection{Constant data mode $\1$ is the leading eigenvector}\label{app:leading_ev}

Here, we prove that the constant eigenvector $\1$ which is responsible
for the OCS solution is associated with the \textit{leading} eigenvalue
of the input-output correlation matrix and hence drives early learning.

\leadingmain*

\begin{proof}

Let $\1$ be an eigenvector to both similarity matrices $\mathbf{X}^{T}\mathbf{X}$
and $\mathbf{Y}^{T}\mathbf{Y}$ associated with eigenvalue $\tilde{\lambda}$. Moreover,
let $\mathbf{X}^{T}\mathbf{X}$ and $\mathbf{Y}^{T}\mathbf{Y}$ have positive entries. Then, the Perron-Frobenius
theorem \citep{perron_zur_1907} guarantees that $\tilde{\lambda}$
is indeed the leading eigenvector to $\mathbf{Y}^{T}\mathbf{Y}$, $\tilde{\lambda}\equiv\lambda_{0}=s_{0}^{2}$.

By \cref{thm:one_and_avg}, $\bar{\mathbf{x}}$ and $\bar{\mathbf{y}}$ are now also the leading
eigenvectors for $\mathbf{XX}^{T}$ and $\mathbf{YY}^{T}$. Because the
eigenvectors of $\mathbf{YY}^{T}$ and $\mathbf{XX}^{T}$ are the left and right singular
vectors of $\mathbf{\Sigma}^{yx}$, respectively, with the eigenvalues being
the squares of the singular values, it follows that 
\[
s_{0}\mathbf{u}_{0}\mathbf{v}_{0}^{T}=\sqrt{\lambda_{0}}\bar{\mathbf{y}}\bar{\mathbf{x}}^{T}.
\]

\end{proof}

\subsection{Shallow network OCS learning}
\label{app:shallow-OCS}

In this brief section we show the OCS signatures of shallow networks with bias terms. The result is displayed in \cref{fig:shallow-net-bias}. We see similar behavioural signatures to deep linear networks. However, the tendency to the OCS is more transient.

\begin{figure}[h]
    \includegraphics[width=0.8\linewidth]{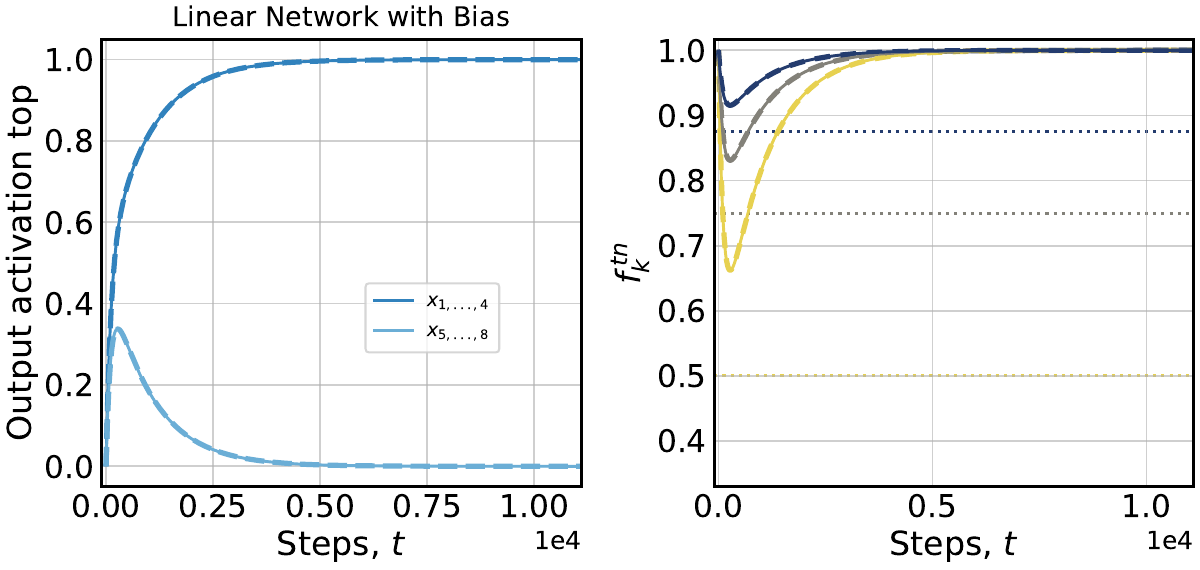}
    \centering
    \caption{Early learning in shallow networks with bias terms approaches the OCS.}
    \label{fig:shallow-net-bias}
\end{figure}

\subsection{TNR in linear networks without bias terms}
\label{app:TNR-TPR-no-bias}
In this short section we provide a supplemental figure relevant for our results in \cref{sec:learners}: We train deep and shallow linear networks \textit{without} bias terms. The learning setting and computation of metrics are equivalent to results in \cref{fig:Human-net-correspondence}. We display the result in \cref{fig:linear_nets_no_bias_TNR}. While networks learn the task, early, response biases are fully absent in these models.

\begin{figure}[h]
    \includegraphics[width=.7\linewidth]{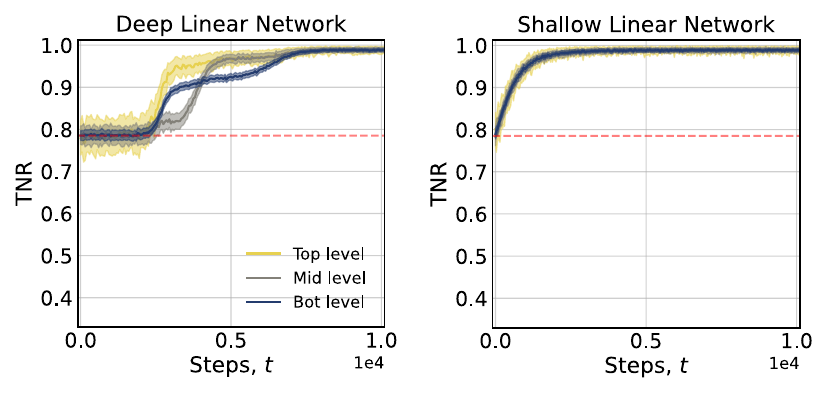}
    \centering
    \caption{True negative rates for linear networks \textit{without} bias terms. We do not see characteristic response patterns observed in \cref{fig:Human-net-correspondence}.}
    \label{fig:linear_nets_no_bias_TNR}
\end{figure}

\subsection{CNN datasets and hyperparameters}
\label{app:CNN-experiments}
\textbf{Datasets used. } We used and adapted different image datasets for our experiments with CNNs. While the main text focused on results obtained with a variant of MNIST we report further experiments we conducted to highlight the universality of early OCS learning. 

\begin{enumerate}
    \item \textbf{Hierarchical MNIST.} We randomly sampled eight digit classes. We subsequently replaced standard one-hot labels with label vectors containing the hierarchical structure as seen in \cref{fig:Task-and-setting}.
    
    \item \textbf{Hieararchical CIFAR-10.} We applied the same procedure and randomly sampled eight classes from CIFAR-10 and replaced one-hot labels as for the hierarchical MNIST.
    
    \item \textbf{Imbalanced-binary-MNIST.} While not described in the main text we also report results in a setting with standard one-hot target vectors. We randomly sample two MNIST classes for training. To assess the impact of the OCS in early learning we introduced class imbalanced by oversampling one of the two classes by a factor of two.

    \item \textbf{Standard CelebA.} We perform experiments on CelebA's face attribute detection task. The task offers a natural testbed for early learning of the OCS as face attribute target labels form a non-uniform distribution as seen in \cref{fig:celeba-OCS}, bottom. We also normalised images in the dataset before training.

\end{enumerate}

\textbf{Model details.} 
We trained a custom CNN with 3 convolutional layers (layer 1: 32 filters of size 5\(\times\)5; layer 2: 64 filters of size 3\(\times\)3; layer 3: 96 filters of size 3\(\times\)3), followed by 2 fully connected layers of sizes 512 and 256. Activation functions for all layers were chosen as ReLUs. The final layer of the model did not contain an activation function when training with squared error loss. In experiments with the class imbalanced-binary-MNIST and cross-entropy loss the final layer contained a softmax function as non-linearity. For experiments on CelebA the final layer contained sigmoid activation functions and we trained with a binary cross-entropy loss over all 40 labels.

\textbf{Training details.} 
For our results on hierarchical MNIST we train models with minibatch SGD with a batch size of 16 and with a relatively small step size of 1e-4 to examine the early learning phase. For all experiments we used Xavier uniform initialisation \citep{glorot_understanding_2010}. Whenever we use bias terms in the model we initialize these as 0 in line with common practice. For our main experiments we train models using a simple squared error loss function. However, to demonstrate generality we repeat experiments for the case of class imbalance using a cross-entropy loss and binary cross-entropy in the case of CelebA. All experiments are repeated 10 times with different random seeds with the exception of CelebA where we used 5 different random seeds, we provide standard errors in all figures (shaded regions). For experiments on the hierarchical CIFAR-10, the class imbalanced MNIST, and CelebA we kept all parameters as above but we increase step size to 1e-3. We trained CNN models on an internal cluster on a single RTX 5000 GPU. Runs took less than one hour to complete.

\subsection{Additional experimental results}
\label{app:additional experiments}
To understand the generality of OCS learning we plot the results of experiments examining early learning of the OCS in these models. We mostly restrict ourselves to plots as seen in \cref{fig:MNIST vs Orthogonal MNIST} as we deem these figures most instructive.

\textbf{Hieararchical CIFAR-10.} We train on a hierarchical version of CIFAR-10. Where we randomly sample 8 classes from MNIST and replace target labels by hierarchical vectors as in \cref{sec:early-dyn-ocs}. We find the key signatures of early OCS learning: We find early indifference, the reversion of performance metrics to the OCS, and a small initial distance of average response the to the OCS solution.  The results mirror behaviour on the hierarchical MNIST shown in \cref{fig:outputs} and \cref{fig:MNIST vs Orthogonal MNIST}.

\begin{figure}[h]
    \includegraphics[width=.9\linewidth]{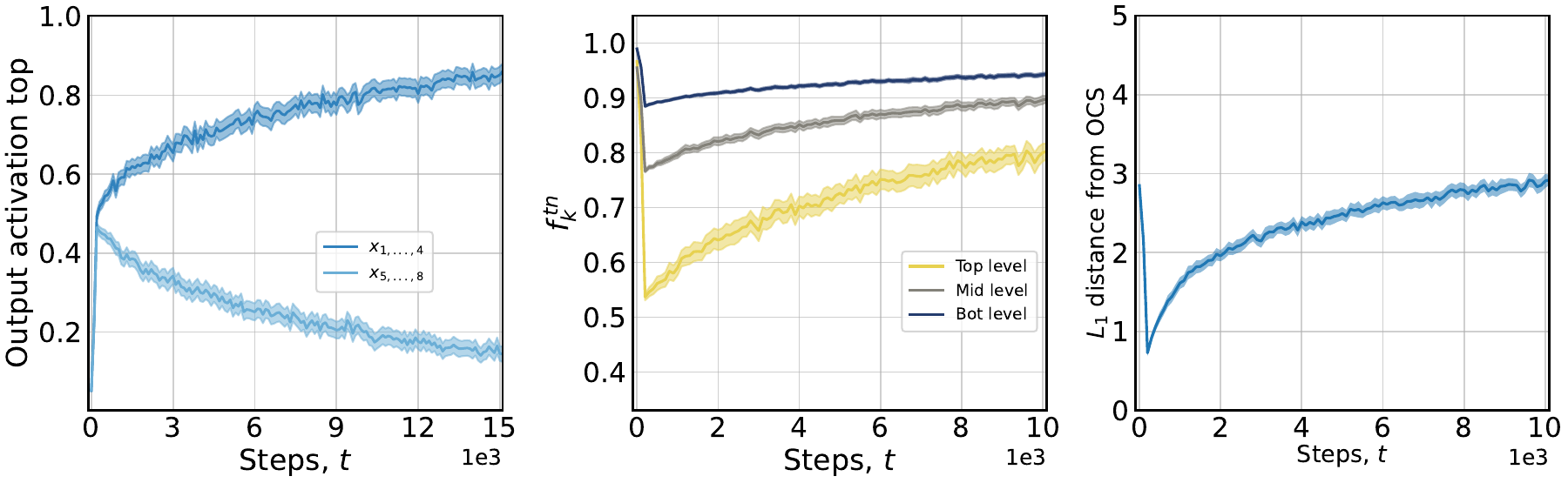}
    \centering
    \caption{Early OCS learning CNNs trained on hierarchical CIFAR-10. \textit{left:} Network outputs for a single output unit in response to all inputs \(\mathbf{x}_i\). \textit{Centre:} Performance metrics \(f^{tn}\) (\cref{app:continous-TPR-and-TNR-Rates}). \textit{Right:} Mean distance of network responses from OCS. Averages taken over every 10 batches for plotting.}
    \label{fig:Cifar-10-hieararchy}
\end{figure}

\textbf{Class-imbalance MNIST.} We train on an imbalanced MNIST task as described in \cref{app:CNN-experiments}. We plot the results for training with squared error and cross-entropy loss functions in \cref{fig:MNIST-Imbalanced}. Both settings show reversion to the OCS. Note that average model outputs in the case of the cross-entropy loss start relatively close to outputs expected under the OCS. Despite this proximity the model is still driven towards the OCS solution. The results on this imbalanced case highlight potential fairness implications. Given that network have been found to revert to the OCS when generalising \citep{kang_deep_2024}, early learning in the OCS setting can transiently, but significantly disadvantage minority classes. We further highlight this point in a second solvable case of linear networks with bias terms in \cref{app:linear-net-classimbalance}. 

\begin{figure}[h]
    \includegraphics[width=.7\linewidth]{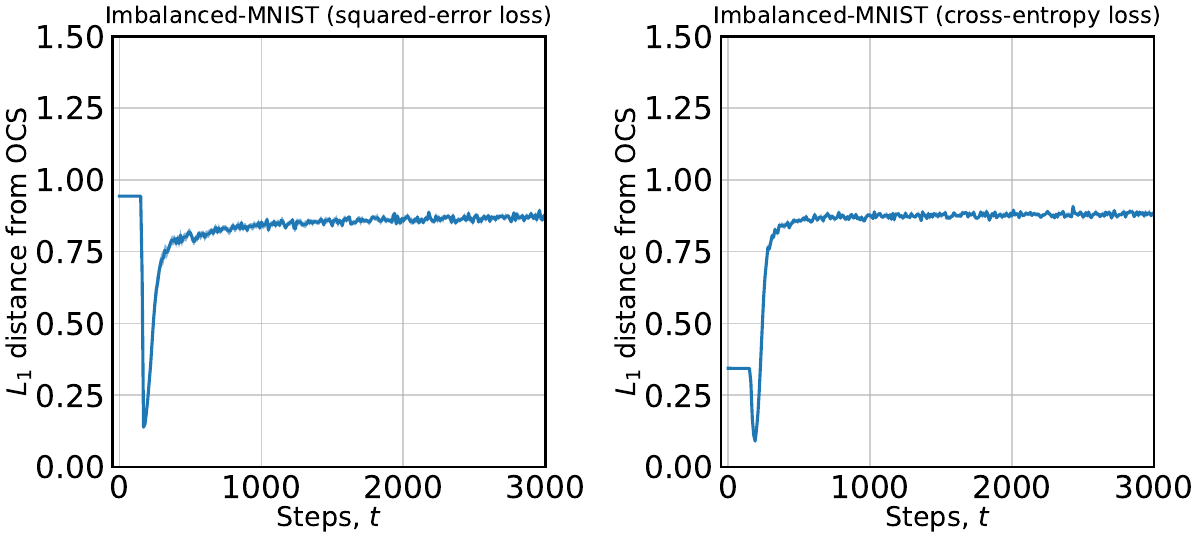}
    \centering
    \caption{Mean distance of network responses from OCS in CNNs trained on the imbalanced MNIST task. Averages taken over each batch.}
    \label{fig:MNIST-Imbalanced}
\end{figure}

\textbf{Standard CelebA.} We show distance from the OCS for the CelebA face attribute detection task in \cref{fig:celeba-OCS}, top. CelebA provides a useful test for our hypothesis as attribute labels display natural imbalances. We highlight the strong non-uniformity of the majority attribute labels in \cref{fig:celeba-OCS}, bottom. We again train networks in two variants: one with squared-error loss and one with binary cross-entropy loss applied over all 40 face attributes. With both loss functions network responses are driven towards the OCS in early learning. This case further highlights the universality of early OCS learning. OCS learning might be especially undesirable in this setting for fairness reasons as the model will be overly liberal in the prediction of the most common face attributes. 

\begin{figure}[h]
    \includegraphics[width=.7\linewidth]{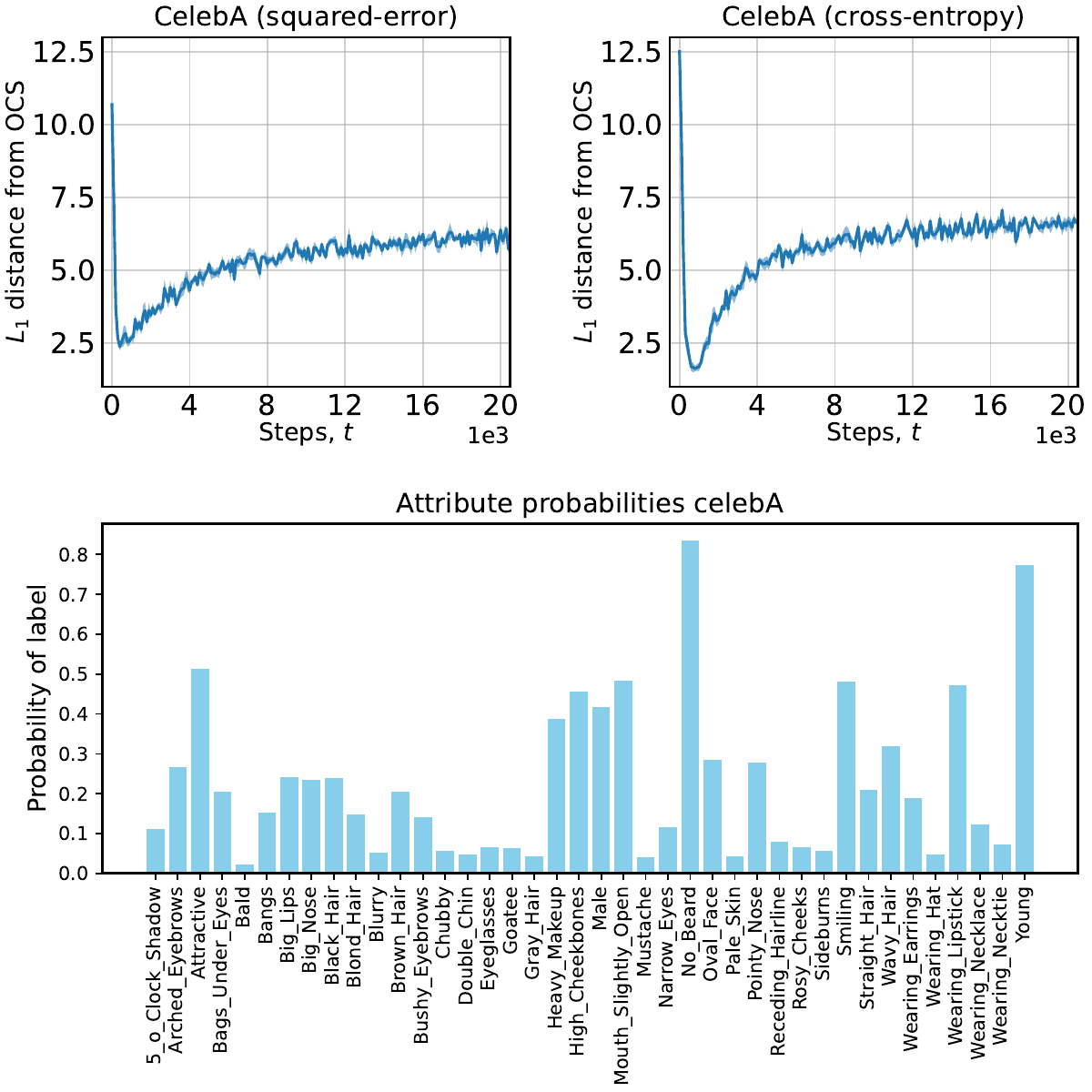}
    \centering
    \caption{\textit{Top:} Mean distance of network responses from the OCS in CNNs trained on the CelebA face attribute prediction task. \textit{Bottom:} Marginal probabilities of CelebA face attributes.}
    \label{fig:celeba-OCS}
\end{figure}

\subsection{Linear networks under class imbalance}
\label{app:linear-net-classimbalance}
In this section, we describe a second case of a solvable linear network with bias terms. Our dataset consists of two examples where one example appears twice as frequently. We show the data used on the right side of \cref{fig:OCS-class-imbalance}. The minority class has two identifying labels, while this construction appears artificial, it allows for the application of \cref{thm:commute} and solutions to learning dynamics from \cref{sec:lin-net-formalism} apply.

The case is of particular practical relevance as it illustrates the impact of early OCS learning under class imbalance, a common problem in machine learning where datasets are often naturally imbalanced \citep{feldman_does_2020, van_horn_devil_2017}. In practice, these settings are often addressed through oversampling of minority classes \citep{haibo_he_learning_2009, huang_learning_2016}. Empirical work by \citet{ye_procrustean_2021} documented that neural networks initially fail to learn information about the minority class while classifying most minority examples as belonging to the majority class. Subsequent theoretical work by \citet{francazi_theoretical_2023} demonstrated that the phenomenon is caused by competition between the optimisation of different classes. 

Our work adds to this literature by providing dynamics in a case of gradient-based learning under class imbalance learning that is exactly solvable.
Our exact solutions highlight the potential role of early OCS learning in the initial failure to learn about minority classes. The OCS solution substantially biases early predictions towards the majority class as seen in \cref{fig:OCS-class-imbalance}, centre. The results also can be understood as solvable analogous to early reversion to the OCS seen in the Imbalanced-binary-MNIST setting in \cref{fig:MNIST-Imbalanced}. 
The results highlight the potential fairness implications of early OCS learning as the learning phase systematically biases the model against the minority classes.

\begin{figure}[h]
    \includegraphics[width=1\linewidth]{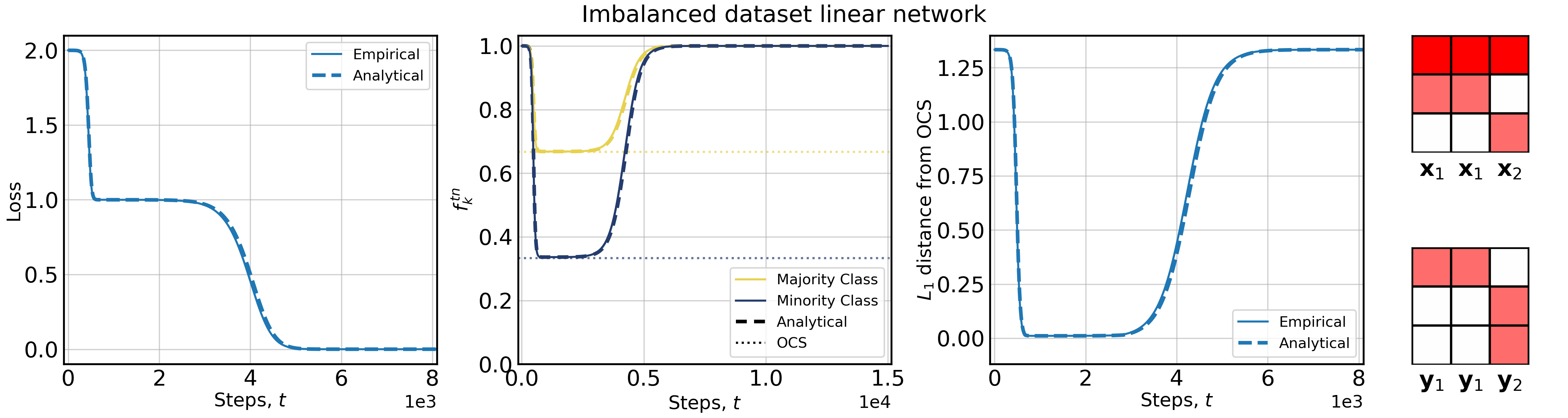}
    \centering
    \caption{Early learning of the OCS in linear networks under class imbalance.}
    \label{fig:OCS-class-imbalance}
\end{figure}

\end{document}